\newtheorem{theorem}{Theorem}
\newtheorem{lemma}{Lemma}
\newtheorem{remark}{Remark}
\begin{document}
\title{Gradient-free stochastic optimization for additive models} 

\author{Arya Akhavan arya.akhavan@stats.ox.ac.uk \\
   University of Oxford\\
    Alexandre B. Tsybakov alexandre.tsybakov@ensae.fr \\
      CREST, ENSAE, IP Paris
      }

\date{}

\newcommand{\problem}[1]{{\color{red} /!$\backslash$ : #1}}
\newcommand{\Mahdi}[1]{{\color{blue} EM:#1}}
\newcommand{\Arya}[1]{\color{red} AA:#1}

\maketitle

\begin{abstract}
    We address the problem of zero-order optimization from noisy observations for an objective function satisfying the Polyak-Łojasiewicz or the strong convexity condition. Additionally, we assume that the objective function has an additive structure and satisfies a higher-order smoothness property, characterized by the H\"older family of functions. The additive model for H\"older classes of functions is well-studied in the literature on nonparametric function estimation, where it is shown that such a model benefits from a substantial improvement of the estimation accuracy compared to the H\"older model without additive structure.  We study this established framework in the context of gradient-free optimization. We propose a randomized gradient estimator that, when plugged into a gradient descent algorithm, allows one to achieve minimax optimal optimization error of the order $dT^{-(\beta-1)/\beta}$, where $d$ is the dimension of the problem, $T$ is the number of queries and $\beta\ge 2$ is the H\"older degree of smoothness. We conclude that, in contrast to nonparametric estimation problems, no substantial gain of accuracy can be achieved when using additive models in gradient-free optimization.  
\end{abstract}

\section{Introduction}
Additive modeling is a popular approach to dimension reduction in nonparametric estimation problems~\cite{Stone1985AdditiveRA,hastie1986generalized,wood2017generalized}. It consists of considering that the unknown function $f:\mathbb{R}^d\to\mathbb{R}$ to be estimated from the data has the form $f(\bx)=\sum_{j=1}^d f_j(x_j)$, where $x_j$'s are the coordinates of $\bx\in \mathbb{R}^d$ and $f_j$'s are unknown functions of one variable. The main property proved in the literature on additive models in nonparametric regression can be summarized as follows. If each of the functions $f_j$ is $\beta$-H\"older (see Definition~\ref{hh} below) then the minimax rate of estimation of $f$, pointwise or in $L_2$-norm, is of the order $n^{-\beta/(2\beta+1)}$, where $n$ is the number of observations \cite{Stone1985AdditiveRA}. This is in contrast with the problem of estimating  
$\beta$-H\"older functions on $\mathbb{R}^d$ without any additive structure, since for such functions the minimax rate is known to be $n^{-\beta/(2\beta+d)}$ \cite{Stone1980OptimalRO,Stone1982OptimalGR,ibragimovkhasm1981book,Ibragimov1982BoundsFT}. Thus, there is a substantial improvement in the rate of estimation when passing from general to additive models in nonparametric regression setting.

In the present paper, we show that such a dimension reduction property fails to hold in the context of gradient-free optimization. We consider additive modeling in the problem of minimizing an unknown function $f:\mathbb{R}^d\to\mathbb{R}$ when only sequential evaluations of values of $f$ are available, corrupted with noise. We assume that $f$ is either strongly convex or satisfies the Polyak-Łojasiewicz (PL) condition \cite{polyak1963gradient,lojasiewicz1963topological} and admits an additive representation as described above, where the components $f_j$ are $\beta$-H\"older. Functions satisfying the PL condition will be for brevity referred to as PL-functions. 

The setting that we consider belongs to the family of gradient-free (or zero-order) stochastic optimization problems, for which a rich literature is now available, see \cite{Kiefer1952StochasticEO,Fabian,NY1983,PT90,jamieson2012,Shamir13,Ghadimi2013,nesterov2017random,BP2016,akhavan2020,balasubramanian2021zeroth,akhavan2023gradient,gasnikov_gradient-free_2016,Gasnikov} and the references therein. These papers did not assume any additive structure of $f$. 
It was proved in \cite{PT90} that the minimax optimal rate of the optimization error, when $f$ is $\beta$-H\"older with $\beta\ge2$ and satisfies the quadratic growth condition, is of the order $T^{-(\beta-1)/\beta}$ as function of the number of sequential queries $T$, to within an unspecified factor depending on the dimension $d$. Further developments were devoted to exploring the dependency of the minimax rate on $d$ assuming that $f$ is $\beta$-H\"older with $\beta\ge2$ and is either strongly convex \cite{jamieson2012,Shamir13,BP2016,akhavan2021distributed,akhavan2023gradient,Gasnikov} or satisfies the PL condition \cite{akhavan2023gradient,Gasnikov2024HighlySZ}. In the PL case, unconstrained minimization was studied while the strongly convex case was analyzed both in constrained and unconstrained settings. A considerable progress was achieved though the complete solution valid for all $\beta\ge2$ is not yet available.  There exists a minimax lower bound for the class of $\beta$-H\"older and strongly convex functions, which scales as $dT^{-(\beta-1)/\beta}$ (proved in \cite{Shamir13} for $\beta=2$ and Gaussian noise and in \cite{akhavan2020} for all $\beta\ge 2$ and more general noise; see also \cite{akhavan2023gradient} for a yet more general lower bound). Moreover, there exists an algorithm attaining the same rate for $\beta=2$ under general conditions on the noise (no independence or zero-mean assumption), see \cite{akhavan2020}. Thus, for $\beta=2$ we know that the minimax rate is of the order $d/\sqrt{T}$. For $\beta>2$, the literature provides different dependencies of the upper bounds on $d$ determined by the geometry of the $\beta$-H\"older condition. Thus, for the H\"older classes defined by Taylor polynomial approximations the best known upper bound is of the order $d^{2-1/\beta}T^{-(\beta-1)/\beta}$ when strongly convex functions are considered~\cite{akhavan2021distributed,Gasnikov}.  On the other hand, for the H\"older classes defined by tensor-type conditions one can achieve the rate $d^{2-2/\beta}T^{-(\beta-1)/\beta}$ both for strongly convex functions and for PL functions \cite{akhavan2023gradient}. Finally, the recent paper \cite{yu2024stochastic}, assuming again strong convexity, deals with the class of functions that admit a Lipschitz Hessian. This represents a type of H\"older condition for $\beta=3$ and the paper \cite{yu2024stochastic} derives the upper rate~$dT^{-2/3}$. We note that the lower bound of \cite{akhavan2020} with the rate $dT^{-(\beta-1)/\beta}$ is valid for all the above mentioned H\"older classes since this lower bound is obtained on additive functions that belong to all of them. Thus, the rate $dT^{-(\beta-1)/\beta}$ appears to be minimax optimal not only for $\beta=2$ but also for $\beta=3$ under a suitable definition of $3$-H\"older class of strongly convex functions.

The main result of the present paper is to establish an upper bound with the rate $dT^{-(\beta-1)/\beta}$ for the optimization error in the noisy gradient-free setting over the class of $\beta$-H\"older functions satisfying the additive model and either the PL condition or the strong convexity condition. Together with the lower bound proved in~\cite{akhavan2020}, it implies that $dT^{-(\beta-1)/\beta}$ is the minimax optimal rate of the optimization error in this setting for all $\beta\ge 2$. This conclusion is quite surprising as it goes against the intuition acquired from the classical results on nonparametric estimation mentioned above. Indeed, it means that, at least for $\beta\in \{2,3\}$, there is no improvement in the rate neither in $T$ nor in  $d$ when passing from the general $\beta$-H\"older model to the additive $\beta$-H\"older model. If any, an improvement can be obtained only in a factor independent of~$T$ and~$d$. Such a property may be explained by the fact that the optimization setting is easier than nonparametric function estimation in the sense that it aims at estimating a specific functional of the unknown $f$ (namely, its minimizer) rather than $f$ as a whole object. We also refer to a another somewhat similar fact in the gradient-free stochastic optimization setting. Specifically, there is no dramatic difference between the complexity of minimizing a strongly convex function with Lipschitz gradient, which corresponds to the case $\beta
=2$ discussed above with the minimax rate $d/\sqrt{T}$, and the complexity of minimizing a convex function with no additional properties, for which one can construct an algorithm converging with the rate between $d^{1.5}/\sqrt{T}$ and $d^{1.75}/\sqrt{T}$ (up to a logarithmic factor) as recently shown in \cite{fokkema2024online}.


\section{Problem setup}
\label{sec:problem setup}

{\color{black}Let $\com$ be a closed convex subset of $\mathbb{R}^d$.} We consider the problem of minimizing 
an unknown function 
$f:\mathbb{R}^d\to\mathbb{R}$ over the set $\com$ based on noisy evaluations of $f$ at query points that can be chosen sequentially depending on the past observations.   
Specifically, we assume that
at round 
$t\in\{1,\dots,T\}$ we observe two noisy evaluations of 
$f$ at points 
$\bz_t,\bz'_t\in\mathbb{R}^d$, i.e.,
\begin{align*}
    y_t = f(\bz_t) + \xi_t,\quad\quad y'_t = f(\bz'_t) + \xi'_t,
\end{align*}
where $\xi_t,\xi'_t$ are scalar noise variables and the query points $\bz_t,\bz'_t$ can be chosen depending on $\{\bz_i,\bz'_i,y_i,y'_i\}_{i=1}^{t-1}$ and on a possible randomization.

Throughout the paper we assume that $f$ follows the additive model 
$$f(\bx)=\sum_{j=1}^d f_j(x_j),$$ 
where $x_j$'s are the coordinates of $\bx\in \mathbb{R}^d$ and $f_j$'s are unknown functions of one variable.

We assume that each of the functions $f_j:\mathbb{R}\to \mathbb{R}$, $j=1,\dots,d$, belongs to the class of $\beta$-H\"older functions specified by the following definition, with $\beta\ge2$.

\begin{definition}\label{hh}
For $\beta>0$, $L>0$, denote by ${\cal F_\beta}(L)$ the set of all functions $f:\mathbb{R}\to \mathbb{R}$ that are $\ell=\lfloor \beta\rfloor$ times differentiable and satisfy, for all $x,z \in \mathbb{R}$, the condition
\begin{equation}
\bigg|f(z)-\sum_{m=0}^{\ell} \frac{1}{m!} f^{(m)}(x)(z-x)^{m} \bigg|\leq L |z-x|^{\beta},
\label{eq:Hclass}
\end{equation}
where  $f^{(m)}$ is the $m$-th derivative of $f$ and $\lfloor \beta\rfloor$ is the largest integer less than $\beta$. Elements of the class ${\cal F_\beta}(L)$ are referred to as {\bf $\beta$-H\"older functions}.
\end{definition}

If $\beta>2$ the fact that $f_j\in {\cal F_\beta}(L)$ does not imply that $f_j\in {\cal F}_2(L)$, however we will need the latter condition as well. It will be convenient to use it in a slightly different form given by the next definition. 

\begin{definition}\label{deflip}
The function $f:\mathbb{R}\to \mathbb{R}$ is called ${\bar L}$-smooth if it is differentiable on $\mathbb{R}$ and there exists $\bar L>0$ such that, 
for every $x,x' \in \mathbb{R}$, it holds that
\[
|f'(x)-f'(x')|\leq {\bar L} |x-x'|.
\]
The class of all ${\bar L}$-smooth functions will be denoted by ${\cal F}'_2(\bar L)$.
\end{definition}
We also assume that $f$ is either an $\alpha$-strongly convex function or an $\alpha$-PL function as stated in the next two definitions.   
\begin{definition}\label{def:PL}
Let $\alpha>0$. The function $f:\mathbb{R}^{d}\to \mathbb{R}$ is called an $\alpha$-Polyak-Łojasiewicz function (shortly, $\alpha$-PL function) if $f$ is differentiable on $\mathbb{R}^{d}$ and
\[
2\alpha\left(f(\bx) - \min_{\bz\in\mathbb{R}^d}f(\bz)\right)\leq \norm{\nabla f(\bx)}^2\quad\quad\text{for all  } \bx\in \mathbb{R}^{d},
\]
where $\|\cdot\|$ denotes the Euclidean norm.
\end{definition}

Functions satisfying the PL condition are not necessarily convex. The PL condition is a useful tool in optimization problems since it
leads to linear convergence of the gradient descent algorithm without convexity as shown by Polyak \cite{polyak1963gradient}. For more details and discussion on the PL condition see \cite{10.1007/978-3-319-46128-1_50}.

\begin{definition}\label{def:strongconv}
    Let $\alpha>0$. The function $f:\mathbb{R}^{d}\to \mathbb{R}$ is called $\alpha$-strongly convex if $f$ is differentiable on $\mathbb{R}^{d}$ and
\[
f(\bx) - f(\bx')\leq \langle\nabla f(\bx),\bx-\bx'\rangle-\frac{\alpha}{2}\norm{\bx-\bx'}^2\quad\quad\text{for all  } \bx,\bx'\in \mathbb{R}^{d}.
\]
\end{definition}

\begin{algorithm}[t!]
    \DontPrintSemicolon
\caption{}
\label{alg:ZO_SGD}
   \SetKwInput{Input}{Input}
   \SetKwInOut{Output}{Output}
   \SetKwInput{Initialization}{Initialization}
   \Input{Constraint set $\com$, function $K:[-1,1]\to\mathbb{R}$, step size $\eta_t>0$, and perturbation parameter $h_t>0$, for $t=1,\dots,T$}
   \Initialization{Generate vectors $\br_t = (r_{t,1},\dots, r_{t,d})\in \mathbb{R}^d$ for $t = 1,\dots, T$, where the components $r_{t,i}$ are independent and distributed uniformly from the interval $[-1,1]$, and choose $\bx_1\in\com$}
\For{$t = 1,\dots, T$}{
Obtain observations $y_t = f(\bx_t + h_t\br_t)+\xi_t$ and $y_t' = f(\bx_t - h_t\br_t)+\xi_t'$ \linebreak 
\For{$j = 1,\dots, d$}{
$g_{t,j}: = \frac{d}{2h_t}(y_t - y_t')K(r_{t,j})$}
Let $\bg_t: = (g_{t,1},\dots, g_{t,d})$\tcp*{gradient estimator}
$\bx_{t+1} = \proj_{\com}(\bx_t - \eta_t\bg_t)$\tcp*{update}
}
\end{algorithm}

In order to minimize $f$, we apply a version of projected gradient descent presented in Algorithm \ref{alg:ZO_SGD}. Let $\{\eta_t\}_{t=1}^T$ be a sequence of positive numbers and let $\{\bg_t\}_{t=1}^T$ be a sequence of random vectors. Consider any fixed $\bx_1\in\mathbb{R}^d$ and let the vectors $\bx_{t}$ for $t=2,\dots, T$ be defined by the recursion   
   {\color{black}
\begin{align}\label{eq:protocol}
 \bx_{t+1} = \proj_{\com}\left(\bx_t -\eta_t\bg_{t}\right),
\end{align}
where $\proj_{\com}(\cdot)$ is the Euclidean projection over $\com$.}

In this paper, the gradient estimator 
$\bg_t = (g_{t,1},\dots,g_{t,d})\in\mathbb{R}^d$  at round $t\in \{1,\dots, T\}$ of the algorithm is defined as follows.
 For given $\beta\ge 2$ and $\ell=\lfloor \beta\rfloor$, let $K:[-1,1]\to\mathbb{R}$ be a function such that 
 \begin{equation}\label{ass:K}
\int u K(u) du =1,~\int u^j K(u) du =0, \ j=0,2,3,\dots, \ell,~\text{and}~
\kappa_\beta\equiv\int |u|^{\beta} |K(u)| du  <\infty.
\end{equation}
Assume that $\kappa: = \int K^2(r)dr$ is finite. Functions $K$ satisfying these conditions are not hard to construct. In particular, a construction based on Legendre polynomials can be used, see, for example, \cite{PT90,Tsybakov09,BP2016}.

At each round $t$ of the algorithm, we generate
{a random vector $\br_t = (r_{t,1},\dots,r_{t,d}) \in\mathbb{R}^d$, where the components $r_{t,j}$ are independent and distributed uniformly on~$[-1,1]$.} For $h_t>0$, we draw two noisy evaluations
\begin{align*}
    y_t  = f(\bx_t + h_t\br_t)+\xi_t,\quad\quad y'_t  = f(\bx_t - h_t\br_t)+\xi'_t,
\end{align*}
and, for $j\in\{1,\dots,d\}$, we define 
\begin{align}\label{eq:gradest}
    g_{t,j} = \frac{1}{2h_t}(y_t - y'_t){K(r_{t,j})}.
\end{align}
{\color{black} We consider the gradient estimator $\bg_t = (g_{t,1},\dots, g_{t,d})$.} { Note that other choices of gradient estimator can lead to similar results as those that we obtain below, namely estimators based on finite difference approximations taking into account higher order smoothness. In contrast to \eqref{eq:gradest}, such higher order finite difference schemes have a complicated form and require many queries per step of the algorithm.}

We assume that the noise variables $\xi_{t}, \xi'_{t}$ and the randomizing variables $r_{t,j}$ satisfy the following.

\begin{assumption} 
\label{ass:noise}
There exists $\sigma^2> 0$ such that for all $t\in\{1,\dots,T\}$ the following  holds.
\begin{itemize}   
\item[\text{(i)}]  The random variables $r_{t,j}\sim U[-1,1]$, $j=1,\dots,d$, {are independent of $\bx_t$ and conditionally independent of $\xi_{t}, \xi'_{t}$ given $\bx_t$, }
\item[\text{(ii)}] $\mathbb{E}[\xi_{t}^2] \leq \sigma^2$, $\mathbb{E}[(\xi'_{t})^2] \leq \sigma^2$.
\end{itemize}
\end{assumption}
Assumption \ref{ass:noise} (i) can be considered not as a restriction but as a part of the definition of the algorithm dealing with the choice of the randomizing variables $r_{t,j}$. Randomizations are naturally chosen independent of all other sources of randomness. For the proofs we need even a weaker property that we state here as an assumption in order to refer to it in what follows.
Note also that Assumption~\ref{ass:noise} does not require the noises $\xi_{t}, \xi'_{t}$ to have zero mean. Moreover, they can be
non-random and we do not assume independence between these noises on different rounds of the algorithm. The fact that convergence of gradient-free algorithms can be achieved under general conditions on the noise of similar type, not requiring independence and zero means, dates back to \cite{granichin1992stochastic,polyak1992arbitrary}.

\section{Statement of the results}
\label{sec:main}

In this section, we provide upper bounds on the optimization error of the algorithm defined in Section~\ref{sec:problem setup}. 
First, we assume that function $f$ represented by the additive model satisfies the $\alpha$-PL condition. Note that imposing this condition on the sum $f$ { implies} that all the components $f_j$ are PL functions. When dealing with PL functions we consider the problem of unconstrained minimization and we introduce the notation 
$$
f^* = \min_{\bx\in\mathbb{R}^d}f(\bx).
$$
In what follows, for any positive integer~$n$ we denote by
$[n]$ the set of positive integers less than or equal to $n$.
\begin{theorem}\label{thm1}
Let $\alpha >0$, $\beta\geq 2$ and $\bL,L>0$.
 For $j\in[d]$, let $f_j:\mathbb{R}\to\mathbb{R}$ be such that $f_j\in \mathcal{F}_2'(\bL)\cap\mathcal{F}_\beta(L)$. Assume that $f:\mathbb{R}^{d}\to\mathbb{R}$ has the form $f(\bx) = \sum_{j=1}^{d}f_j(x_j)$, and that $f$ is an $\alpha$-PL function. Let Assumption~\ref{ass:noise} hold, and let $\{\bx_t\}_{t=1}^{T}$ be the realization of { Algorithm~\ref{alg:ZO_SGD}} with {\color{black}$\com = \mathbb{R}^d$}, and 
 \begin{align*}
  \eta_t &= \min\Big(\frac{4}{\alpha t},\frac{1}{18\bL{\color{black}d}\kappa}\Big),\\\vspace{4mm}
  h_t &= \Big(\frac{3\bL}{\alpha }\frac{\kappa \sigma^2}{{2}L^2\kappa_{\beta}^2}\Big)^{\frac{1}{2\beta}}\,\begin{cases}
        t^{-\frac{1}{2\beta}}&\text{if $\eta_t = \frac{4}{\alpha t}$},\vspace{3mm}\\
        T^{-\frac{1}{2\beta}}&\text{if $\eta_t = \frac{1}{18\bL {\color{black}d}\kappa}$}.
    \end{cases} 
\end{align*}
Then, 
\begin{align*}
    \Exp\left[f(\bx_{T}) - f^*\right]\leq {\frac{\cst_0}{T}(f(\bx_{1}) - f^*) +   \cst\frac{d}{\alpha }\left(\left(\frac{1}{\alpha T}\right)^{\frac{\beta-1}{\beta}}+ \frac{{\color{black}d}}{T}\left(\frac{1}{\alpha T }\right)^{\frac{1}{\beta}}\right),}
\end{align*}
where $\cst_0 = \max\big(\frac{144\bL {\color{black}d}{\kappa}}{\alpha },1\big)$ and $\cst> 0$ depends only on ${L, \bL}$, $\beta$, and $\sigma^2$. 
\end{theorem}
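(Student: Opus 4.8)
The plan is to run the classical descent analysis for $\bL$-smooth functions satisfying the PL inequality, feeding in a bias/second-moment analysis of $\bg_t$ tailored to the additive, coordinate-wise randomization. Two structural facts come for free: since $f(\bx)=\sum_j f_j(x_j)$ we have $\nabla f(\bx)=(f_1'(x_1),\dots,f_d'(x_d))$ and a diagonal Hessian with entries bounded by $\bL$, so $f$ is $\bL$-smooth on $\mathbb{R}^d$ (same constant) and $\norm{\nabla f(\bx)}^2\le 2\bL(f(\bx)-f^*)$. For the bias I would write $y_t-y'_t=\sum_k\big(f_k(x_{t,k}+h_t r_{t,k})-f_k(x_{t,k}-h_t r_{t,k})\big)+\xi_t-\xi'_t$ and take the conditional expectation of $g_{t,j}$ given $\bx_t$. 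Because the $r_{t,k}$ are independent across coordinates and, given $\bx_t$, independent of the noise, every summand with $k\ne j$ and the noise term factor out $\Exp[K(r_{t,j})]=\tfrac12\int K=0$ and disappear; this is precisely where the additive structure is used and why no zero-mean noise assumption is needed. The surviving $k=j$ term is one-dimensional, and Taylor-expanding $f_j$ to order $\ell=\lfloor\beta\rfloor$ with the moment conditions on $K$ gives $\Exp[g_{t,j}\mid\bx_t]=f_j'(x_{t,j})+b_{t,j}$ with $|b_{t,j}|\lesssim L\kappa_\beta h_t^{\beta-1}$ by the $\beta$-Hölder property of $f_j$; hence $\norm{\Exp[\bg_t\mid\bx_t]-\nabla f(\bx_t)}^2\lesssim d\,L^2\kappa_\beta^2 h_t^{2(\beta-1)}$.

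The heart of the argument is the second moment $\Exp[\norm{\bg_t}^2\mid\bx_t]=\sum_j\Exp[g_{t,j}^2\mid\bx_t]$. I would decompose each difference as $\Delta_k:=f_k(x_{t,k}+h_t r_{t,k})-f_k(x_{t,k}-h_t r_{t,k})=2h_t f_k'(x_{t,k})r_{t,k}+e_k$, where $|e_k|\lesssim\bL h_t^2 r_{t,k}^2$ by $\bL$-smoothness, and expand $(\sum_k\Delta_k+\xi_t-\xi'_t)^2 K^2(r_{t,j})$. Three types of terms survive. The noise term contributes $\tfrac{1}{4h_t^2}\Exp[(\xi_t-\xi'_t)^2]\,\Exp[K^2(r_{t,j})]\lesssim\sigma^2\kappa/h_t^2$ per coordinate, i.e. $\lesssim d\sigma^2\kappa/h_t^2$ in total. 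The leading signal term $\tfrac14\Exp[(\sum_k f_k'(x_{t,k})r_{t,k})^2 K^2(r_{t,j})]$ loses, upon integration, all off-diagonal $k\ne k'$ pairs to the oddness of $r_{t,k}$, leaving $\lesssim\kappa\,d\,\norm{\nabla f(\bx_t)}^2$ once summed over $j$ — each coordinate estimator picks up the gradient energy of all $d$ coordinates, the root of both the factor $d$ in the final rate and the step-size cap below. The remaining terms involve $e_k$ and higher Hölder residuals; they carry extra powers of $h_t$ and $\bL$ and aggregate to $\lesssim\bL^2 d^2 h_t^2$, while the signal--noise cross term is absorbed by Young's inequality. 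Thus $\Exp[\norm{\bg_t}^2\mid\bx_t]\lesssim\kappa d\,\norm{\nabla f(\bx_t)}^2+d\sigma^2\kappa/h_t^2+\bL^2 d^2 h_t^2$.

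Next I apply the descent lemma for the $\bL$-smooth $f$ to $\bx_{t+1}=\bx_t-\eta_t\bg_t$, take the conditional expectation, and write $\langle\nabla f(\bx_t),\Exp[\bg_t\mid\bx_t]\rangle=\norm{\nabla f(\bx_t)}^2+\langle\nabla f(\bx_t),\Exp[\bg_t\mid\bx_t]-\nabla f(\bx_t)\rangle$, bounding the bias inner product by Young's inequality. The coefficient of $\norm{\nabla f(\bx_t)}^2$ is then $-\tfrac{\eta_t}{2}+c\,\bL\eta_t^2\kappa d$, and the cap $\eta_t\le(18\bL d\kappa)^{-1}$ is exactly what keeps it below $-\tfrac{\eta_t}{4}$. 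Applying the PL inequality $\norm{\nabla f(\bx_t)}^2\ge 2\alpha(f(\bx_t)-f^*)$ and taking full expectations yields, for $\delta_t:=\Exp[f(\bx_t)-f^*]$,
\[
\delta_{t+1}\le\Big(1-\tfrac{\alpha\eta_t}{2}\Big)\delta_t+c\,\eta_t\,dL^2\kappa_\beta^2 h_t^{2(\beta-1)}+c'\bL\eta_t^2\tfrac{d\sigma^2\kappa}{h_t^2}+c''\bL^3\eta_t^2 d^2 h_t^2.
\]
Choosing $h_t$ to balance the bias term against the noise term gives $h_t^{2\beta}\propto\eta_t\bL\sigma^2\kappa/(L^2\kappa_\beta^2)$, reproducing the stated $h_t$ (scaling as $t^{-1/(2\beta)}$ when $\eta_t=4/(\alpha t)$ and as $T^{-1/(2\beta)}$ when $\eta_t=(18\bL d\kappa)^{-1}$); the two balanced terms are then of order $t^{-(2\beta-1)/\beta}$ and the remainder of order $t^{-2-1/\beta}$. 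Finally I solve the two-phase recursion: the constant-step phase, of length $t_0\asymp\bL d\kappa/\alpha$, produces the initial-condition term $\tfrac{144\bL d\kappa}{\alpha T}(f(\bx_1)-f^*)$, while in the phase $\eta_t=4/(\alpha t)$ the contraction $1-2/t$ against the forcing $t^{-(2\beta-1)/\beta}$ (solved through the ansatz $\delta_t\asymp t^{-(\beta-1)/\beta}$) gives the leading term of order $\tfrac{d}{\alpha}(\bL/(\alpha T))^{(\beta-1)/\beta}$, and the remainder forcing gives the secondary term of order $\tfrac{d}{\alpha}\bL^3 d\,T^{-1}(\bL/(\alpha T L^2))^{1/\beta}$.

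The main obstacle is the second-moment bound of the second paragraph. Unlike spherical or $\ell_1$ randomizations, here the $j$-th coordinate estimator reads the entire increment $y_t-y'_t$, so one must show that the $d-1$ ``foreign'' differences $\Delta_k$, $k\ne j$, aggregate to only $O(d\,\norm{\nabla f(\bx_t)}^2)$ and not more; this needs the independence and oddness of the $r_{t,k}$ to isolate the genuine $O(h_t)$ signal and $\bL$-smoothness to control the $O(h_t^2)$ residual, and it is this estimate that simultaneously forces the step-size cap $\eta_t\le(18\bL d\kappa)^{-1}$ and fixes the dimension dependence of the rate. The rest is bookkeeping: tracking constants through the two-phase recursion so that they emerge as the stated $144$, $\cst_1$ and $\cst_2$.
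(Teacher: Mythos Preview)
Your proposal is correct and follows essentially the same route as the paper: the bias bound via the additive structure and the moment conditions on $K$ is the paper's Lemma~\ref{lem:bias}, the second-moment bound exploiting independence/oddness of the $r_{t,k}$ and $\bL$-smoothness is Lemma~\ref{lem:var}, their combination with the descent lemma is Lemma~\ref{lem:Aux2}, and the two-phase recursion (constant step for $t<T_0=\lfloor 72\bL d\kappa/\alpha\rfloor$, then $\eta_t=4/(\alpha t)$ with contraction $1-2/t$) is exactly the paper's proof of Theorem~\ref{thm1}. The only cosmetic difference is that the paper first groups the $G_i$'s and uses $\Exp[G_iG_k K^2(r_{t,j})\mid\bx_t]=0$ for $i\ne k$ before bounding each $G_i^2$, whereas you split each $\Delta_k$ into signal plus $\bL$-residual first; both yield the same estimate.
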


\begin{corollary}\label{cor1}
    Under the conditions of Theorem \ref{thm1}, if {$T \geq \alpha^{\beta/2-1}d^{\beta/2}$}, then 
\begin{align*}
    \Exp\left[f(\bx_{T}) - f^*\right]\leq \frac{\cst_0}{T}(f(\bx_{1}) - f^*) +   
    \cst\frac{d}{\alpha }\left(\frac{1}{\alpha T}\right)^{\frac{\beta-1}{\beta}},
\end{align*}
{where $\cst > 0$ is a constant that does not depend on $\alpha$, $d$, and $T$.}
\end{corollary}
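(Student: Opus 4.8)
The plan is to run the standard analysis of projected gradient descent under the PL condition, treating $\bg_t$ as a biased and noisy surrogate for $\nabla f(\bx_t)$, and to reduce everything to two estimates: a bound on the bias $\mathbb{E}[\bg_t\mid\bx_t]-\nabla f(\bx_t)$ and a bound on the second moment $\mathbb{E}[\|\bg_t\|^2\mid\bx_t]$. First I would record that, since each $f_j\in\mathcal{F}_2'(\bL)$, the full gradient $\nabla f(\bx)=(f_1'(x_1),\dots,f_d'(x_d))$ is $\bL$-Lipschitz (coordinatewise Lipschitzness adds in squares), so $f$ is $\bL$-smooth and the descent lemma gives $f(\bx_{t+1})\le f(\bx_t)-\eta_t\langle\nabla f(\bx_t),\bg_t\rangle+\tfrac{\bL\eta_t^2}{2}\|\bg_t\|^2$, since $\com=\mathbb{R}^d$ makes the projection the identity. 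Taking the conditional expectation given $\bx_t$ reduces the whole argument to the two moment bounds.

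For the bias I would exploit the additive structure together with Assumption~\ref{ass:noise}(i). Writing $\Delta_k=f_k(x_{t,k}+h_tr_{t,k})-f_k(x_{t,k}-h_tr_{t,k})$, the estimator is $g_{t,j}=\tfrac{1}{2h_t}(\sum_k\Delta_k+\xi_t-\xi_t')K(r_{t,j})$. Since the $r_{t,k}$ are independent and $\int K=0$ (the $j=0$ condition in \eqref{ass:K}), we have $\mathbb{E}[K(r_{t,j})]=0$, which kills the noise contribution and every off-diagonal signal term $k\neq j$; the key point is that the additive model makes the cross-coordinate signal disappear in expectation, leaving the purely one-dimensional quantity $\mathbb{E}[g_{t,j}\mid\bx_t]=\tfrac{1}{2h_t}\mathbb{E}[\Delta_jK(r_{t,j})]$. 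Expanding $f_j$ by its Taylor polynomial of order $\ell=\lfloor\beta\rfloor$, the remaining conditions $\int uK(u)\,du=1$ and $\int u^mK(u)\,du=0$ for $2\le m\le\ell$ select the first derivative and annihilate the higher-order odd terms, while Definition~\ref{hh} bounds the remainder by $L|h_tu|^\beta$. This yields $\mathbb{E}[g_{t,j}\mid\bx_t]=f_j'(x_{t,j})+b_{t,j}$ with $|b_{t,j}|\le L\kappa_\beta h_t^{\beta-1}$, so $\|\mathbf{b}_t\|\le\sqrt{d}\,L\kappa_\beta h_t^{\beta-1}$, and hence $\langle\nabla f(\bx_t),\mathbb{E}[\bg_t\mid\bx_t]\rangle\ge\tfrac12\|\nabla f(\bx_t)\|^2-\tfrac12 dL^2\kappa_\beta^2h_t^{2(\beta-1)}$ by Young's inequality.

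The heart of the proof, and the step I expect to be most delicate, is the second-moment bound, since it is what forces the factor $d$ in the cap $\eta_t\le 1/(18\bL d\kappa)$ and ultimately produces exactly one power of $d$ in the rate. Here I would expand $\mathbb{E}[\|\bg_t\|^2\mid\bx_t]=\sum_j\mathbb{E}[g_{t,j}^2\mid\bx_t]$ and use that each $\Delta_k$ has zero mean (by symmetry of $r_{t,k}$), so that all off-diagonal signal products vanish; the mean-value bound $|\Delta_k|\le 2h_t|r_{t,k}|(|f_k'(x_{t,k})|+\bL h_t)$ then controls the diagonal signal terms. The noise term $\tfrac{1}{4h_t^2}\mathbb{E}[(\xi_t-\xi_t')^2]\mathbb{E}[K(r_{t,j})^2]$ contributes the dominant $d\sigma^2\kappa/h_t^2$; since the noise is not assumed centered, a cross term between the noise and the diagonal signal survives and must be absorbed by Young's inequality into the $d\sigma^2/h_t^2$ and $\|\nabla f(\bx_t)\|^2$ parts. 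The outcome is a bound of the shape $\mathbb{E}[\|\bg_t\|^2\mid\bx_t]\le c\,d\sigma^2\kappa/h_t^2+c\,d\kappa\|\nabla f(\bx_t)\|^2+(\text{lower-order terms in }h_t)$, where keeping the coefficient of $\|\nabla f(\bx_t)\|^2$ proportional to $d\kappa$, and no worse, is precisely the point that must be argued carefully.

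Finally I would assemble the recursion. Plugging the two bounds into the descent inequality and using $\eta_t\le 1/(18\bL d\kappa)$, the term $\tfrac{\bL\eta_t^2}{2}\cdot c\,d\kappa\|\nabla f(\bx_t)\|^2$ is dominated by a fraction of $\tfrac{\eta_t}{2}\|\nabla f(\bx_t)\|^2$, leaving $\mathbb{E}[f(\bx_{t+1})\mid\bx_t]\le f(\bx_t)-\tfrac{\eta_t}{4}\|\nabla f(\bx_t)\|^2+A_t$, where $A_t$ collects the bias term $\propto\eta_t dL^2\kappa_\beta^2 h_t^{2(\beta-1)}$, the variance term $\propto\bL\eta_t^2 d\sigma^2\kappa/h_t^2$, and a lower-order remainder $\propto\bL^3\eta_t^2 d^2\kappa h_t^2$. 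The PL inequality of Definition~\ref{def:PL} converts $-\tfrac{\eta_t}{4}\|\nabla f(\bx_t)\|^2$ into a contraction $-\tfrac{\alpha\eta_t}{2}(f(\bx_t)-f^*)$, giving $\delta_{t+1}\le(1-\tfrac{\alpha\eta_t}{2})\delta_t+A_t$ for $\delta_t=\mathbb{E}[f(\bx_t)-f^*]$. Choosing $h_t$ to balance the bias and variance contributions in $A_t$ is exactly the stated choice, its two cases matching the two regimes of $\eta_t$, after which $A_t$ is of order $t^{-(2\beta-1)/\beta}$ in the $1/t$ regime. Unrolling the two-phase recursion, geometric contraction while $\eta_t=1/(18\bL d\kappa)$ and the $(1-2/t)$ recursion once $\eta_t=4/(\alpha t)$, then yields the transient term $\tfrac{144\bL d\kappa}{\alpha T}(f(\bx_1)-f^*)$ from the product of contraction factors and the rate $\tfrac{d}{\alpha}(\bL/(\alpha T))^{(\beta-1)/\beta}$ from the particular solution, the remainder producing the $\cst_2$ term; tracking the numerical constants through this last step is the only remaining bookkeeping.
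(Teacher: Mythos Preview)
Your outline reproduces, essentially step by step, the paper's proof of Theorem~\ref{thm1}: the descent lemma (paper's Lemma~\ref{lem:Aux1}), the coordinatewise bias bound via the additive structure and the kernel moments (Lemma~\ref{lem:bias}), the second-moment bound with the crucial $d\kappa\|\nabla f(\bx_t)\|^2$ coefficient (Lemma~\ref{lem:var}), their combination (Lemma~\ref{lem:Aux2}), and the two-phase unrolling of the PL recursion. All of this is correct and matches the paper's route, so there is nothing to object to in the analysis itself.

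The gap is that you never actually prove Corollary~\ref{cor1}. Your argument terminates exactly at the conclusion of Theorem~\ref{thm1}, namely a bound containing both a leading term $\cst_1\frac{d}{\alpha}\big(\tfrac{\bL}{\alpha T}\big)^{(\beta-1)/\beta}$ and a lower-order term $\cst_2\frac{d}{\alpha}\cdot\frac{\bL^3 d}{T}\big(\tfrac{\bL}{\alpha T L^2}\big)^{1/\beta}$, and you describe the latter as ``the remainder producing the $\cst_2$ term.'' But the whole content of the corollary, beyond Theorem~\ref{thm1}, is that the hypothesis $T \geq (\bL/\alpha)\,(\bL^2 d\alpha)^{\beta/2}/L$ forces this $\cst_2$ term to be dominated by the $\cst_1$ term; you never invoke this hypothesis. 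Concretely, writing $u=\bL/(\alpha T)$ one has
\[
\frac{\bL^3 d}{T}\Big(\frac{\bL}{\alpha T L^2}\Big)^{1/\beta}
= \bL^2 d\,\alpha\, L^{-2/\beta}\, u^{(\beta+1)/\beta},
\]
so the ratio of the $\cst_2$ term to the $\cst_1$ term is $\bL^2 d\,\alpha\, L^{-2/\beta}\, u^{2/\beta}$, and requiring this to be at most a constant is exactly the stated lower bound on $T$. This one-line comparison is the missing step; without it the proof stops at Theorem~\ref{thm1} rather than at Corollary~\ref{cor1}. In the paper the corollary is treated as immediate from Theorem~\ref{thm1} for precisely this reason, so you have redone all the hard work and omitted the easy (but essential) final inequality.
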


Note that in Corollary~\ref{cor1} the condition 
$T\ge C d^{\beta/2}$, where $C>0$ is a constant,  does not bring an additional restriction when $2\le \beta\le 3$ since the condition is weaker than giving the range of~$T$, for which the bound makes sense. Indeed, for $2\le \beta\le 3$ the bound 
is greater than a constant independent of $T,d$ if $T\le C d^{\beta/2}$. 

Next, we study the optimization error of the algorithm defined in Section~\ref{sec:problem setup} under the assumption that the objective function is $\alpha$-strongly convex. Unlike the case of PL functions, we consider here the problem of constrained minimization
$$
\min_{\bx\in\com} f(\bx),
$$
where $\com$ is a compact convex subset of $\mathbb{R}^d$.
\begin{theorem}\label{thm2}
Let $\alpha >0$, $\beta\geq 2$ and $\bL,L>0$. Let $\com$ be a compact convex subset of $\mathbb{R}^d$.
 For $j\in[d]$, let $f_j:\mathbb{R}\to\mathbb{R}$ be such that $f_j\in \mathcal{F}_2'(\bL)\cap\mathcal{F}_\beta(L)$. Assume that $f:\mathbb{R}^{d}\to\mathbb{R}$ has the form $f(\bx) = \sum_{j=1}^{d}f_j(x_j)$, and that $f$ is an $\alpha$-strongly convex function such that $\max_{\bx\in\com}\norm{\nabla f(\bx)}\leq G$. Let Assumption~\ref{ass:noise} hold, and let $\{\bx_t\}_{t=1}^{T}$ be the realization of { Algorithm~\ref{alg:ZO_SGD}} with
 $$
 \eta_t = \frac{4}{\alpha(t+1)}, \qquad h_t = \Big(\frac{3}{2t}\frac{\kappa\sigma^2}{\kappa_\beta^2L^2}\Big)^{\frac{1}{2\beta}}.
 $$
Consider the weighted estimator
\begin{align*}
    \bar{\bx}_T = \frac{2}{T(T+1)}\sum_{t=1}^{T}t\bx_t.
\end{align*} 
 Then, for any $\bx\in\com$ we have
{
\begin{align*}
    \Exp\left[f(\bar{\bx}_{T}) - f(\bx)\right]\leq  \frac{36G^2d\kappa}{\alpha T}+\cst\frac{d}{\alpha}\left(1+dT^{-\frac{2}{\beta}}\right)T^{-\frac{\beta-1}{\beta}},
\end{align*}
}
{where $\cst>0$ depends only on ${L, \bL}$, $\beta$, and $\sigma^2$.}
\end{theorem}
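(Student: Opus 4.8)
The plan is to reduce the theorem to bounding the weighted suboptimality $\sum_{t=1}^{T} t\,\Exp[f(\bx_t)-f(\bx_*)]$, where $\bx_*$ denotes the minimizer of $f$ over $\com$, and then divide by $\tfrac{T(T+1)}{2}$. By convexity of $f$ and Jensen's inequality, $f(\bar\bx_T)-f(\bx_*)\le\frac{2}{T(T+1)}\sum_t t\,(f(\bx_t)-f(\bx_*))$, and since $f(\bx)\ge f(\bx_*)$ for every $\bx\in\com$, controlling the gap against $\bx_*$ is enough; the bound against an arbitrary comparator then follows. The engine is the standard one-step projected-descent inequality, fed by a bias/variance analysis of the estimator $\bg_t$ that exploits the additive structure.

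First I would analyze the conditional mean $\Exp[\bg_t\mid\bx_t]$. Writing $y_t-y'_t=\sum_k\big(f_k(x_{t,k}+h_t r_{t,k})-f_k(x_{t,k}-h_t r_{t,k})\big)+\xi_t-\xi'_t$ and using that the $r_{t,k}$ are independent with $\Exp[K(r_{t,j})]=\tfrac12\int_{-1}^1 K(u)\,du=0$, every summand with $k\neq j$, as well as the noise contribution, vanishes in $\Exp[g_{t,j}\mid\bx_t]$. Only the $k=j$ term survives, reducing the computation to a one-dimensional kernel estimate of $f_j'(x_{t,j})$. A Taylor expansion of $f_j$ to order $\ell$ together with the moment conditions on $K$ and the $\beta$-H\"older bound then gives $\Exp[g_{t,j}\mid\bx_t]=c_0\,f_j'(x_{t,j})+b_{t,j}$ with $|b_{t,j}|\le c\,L\kappa_\beta h_t^{\beta-1}$, where $c_0$ is an explicit constant coming from the uniform randomization. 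Hence $\Exp[\bg_t\mid\bx_t]=c_0\nabla f(\bx_t)+\mathbf{b}_t$ with $\norm{\mathbf{b}_t}\le c\sqrt{d}\,L\kappa_\beta h_t^{\beta-1}$. This is the crux: additivity makes the bias a sum of $d$ genuinely one-dimensional biases, which is what keeps the $d$-dependence linear. Next I would bound the second moment $\Exp[\norm{\bg_t}^2\mid\bx_t]=\sum_j\Exp[g_{t,j}^2\mid\bx_t]$. Splitting $(y_t-y'_t)^2$ into a noise part and a signal part, the noise part contributes $O(d\sigma^2\kappa/h_t^2)$, linear in $d$. For the signal part I would write each increment as $\delta_k=2h_t r_{t,k}f_k'(x_{t,k})+O(\bL h_t^2 r_{t,k}^2)$ via $\bL$-smoothness; crucially $\Exp[\delta_k]=0$ by symmetry of $r_{t,k}$, so all cross terms drop and, after summing, $\sum_k\Exp[\delta_k^2]\le c\,(h_t^2\norm{\nabla f(\bx_t)}^2+\bL^2 h_t^4 d)$. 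Using $\norm{\nabla f(\bx_t)}\le G$ on $\com$ this yields a signal contribution $O(dG^2)+O(d^2\bL^2 h_t^2)$, the $d^2$ piece being suppressed by the small factor $h_t^2$.

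Finally I would assemble the recursion. From nonexpansiveness of $\proj_\com$ and $\bx_*\in\com$, $\Exp[\norm{\bx_{t+1}-\bx_*}^2\mid\bx_t]\le\norm{\bx_t-\bx_*}^2-2\eta_t\langle\Exp[\bg_t\mid\bx_t],\bx_t-\bx_*\rangle+\eta_t^2\Exp[\norm{\bg_t}^2\mid\bx_t]$; strong convexity turns the inner product into the suboptimality gap plus the contraction $\tfrac\alpha2\norm{\bx_t-\bx_*}^2$, while the bias term $2\eta_t\norm{\mathbf{b}_t}\norm{\bx_t-\bx_*}$ is split by Young's inequality, its quadratic part being absorbed through $f(\bx_t)-f(\bx_*)\ge\tfrac\alpha2\norm{\bx_t-\bx_*}^2$ (valid at the constrained minimizer) so that the contraction needed for the weighted telescoping is preserved. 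Multiplying by $t$, summing, and telescoping in the manner of Lacoste-Julien--Schmidt--Bach (for which I would take $\eta_t$ of order $1/(\alpha t)$, the decreasing schedule that makes the distance terms collapse exactly under the weights $t$) leaves $\sum_t t\,\Exp[f(\bx_t)-f(\bx_*)]\lesssim\sum_t t\big(\tfrac1\alpha\Exp\norm{\mathbf{b}_t}^2+\eta_t\Exp\norm{\bg_t}^2\big)$.

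Substituting $h_t\asymp(\sigma^2/(L^2 t))^{1/(2\beta)}$, chosen precisely to balance the bias $h_t^{\beta-1}$ against the variance $h_t^{-1}$, and evaluating the power sums $\sum_t t^{\pm 1/\beta}$, I expect the four resulting terms to match, in order, $\tfrac{G^2 d}{\alpha T}$ (from the $dG^2$ signal part), $\tfrac{d}{\alpha}L^{2/\beta}T^{-(\beta-1)/\beta}$ (arising from both the bias and the noise parts, whose $L$-dependences coincide after inserting $h_t$), and $\tfrac{d^2}{\alpha}\bL^2(LT)^{-2/\beta}T^{-(\beta-1)/\beta}$ (from the $d^2\bL^2 h_t^2$ signal part, whose time exponent is exactly $-1-1/\beta$). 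The main obstacle I anticipate is the joint bias-variance balancing through $h_t$ while keeping every constant's $d$-dependence linear, so that the additive gain is visible, together with arranging the telescoping so that the bias is absorbed via the suboptimality gap rather than via a crude diameter bound, which would otherwise inflate the leading term to the slower rate $T^{-(\beta-1)/(2\beta)}$.
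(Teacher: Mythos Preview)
Your proposal is correct and follows essentially the same route as the paper: the bias and second-moment bounds exploiting additivity (the paper's Lemmas~\ref{lem:bias} and~\ref{lem:var}), a one-step strongly convex projected-descent inequality with the bias cross term split by Young's inequality (Lemma~\ref{lem:Auxstr1}), weighted summation with weights $t$ and the decreasing step size $\eta_t\asymp 1/(\alpha t)$, and Jensen's inequality for $\bar\bx_T$. The only cosmetic differences are that the paper works directly with an arbitrary comparator $\bx\in\com$ rather than first with $\bx_*$, and it absorbs the Young quadratic $\tfrac{\alpha}{4}a_t$ directly into the strong-convexity term $-\tfrac{\alpha}{2}a_t$ (keeping $-\tfrac{\alpha}{4}a_t$ for the telescoping) rather than routing it through the suboptimality gap.
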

{
\begin{corollary}\label{cor2}
Under the conditions of Theorem \ref{thm2}, if $T \geq  d^{\beta/2}$ then 
\begin{align*}
   \Exp\Big[f(\bar{\bx}_T) - \min_{\bx\in \com}f(\bx)\Big]\leq  \cst \frac{d}{\alpha}T^{-\frac{\beta-1}{\beta}},
\end{align*}
{where $\cst > 0$ is a constant that does not depend on $\alpha$, $d$, and $T$.}
\end{corollary}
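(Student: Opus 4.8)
The plan is to derive Corollary~\ref{cor2} directly from the bound of Theorem~\ref{thm2} by specializing the free point $\bx$ to the minimizer over $\com$ and then checking that, under the stated lower bound on $T$, all three terms on the right-hand side collapse to the single target rate $\tfrac{d}{\alpha}T^{-(\beta-1)/\beta}$. First I would note that since $\com$ is compact and $f$ is continuous (being strongly convex), the minimum is attained at some $\bx^\ast\in\com$, so that applying Theorem~\ref{thm2} with $\bx=\bx^\ast$ yields
\begin{align*}
\Exp\Big[f(\bar\bx_T)-\min_{\bx\in\com}f(\bx)\Big]\leq \frac{18G^2d}{\alpha T}+\frac{d}{\alpha}\Big(\cst_1 L^{\frac{2}{\beta}}+\cst_2 d\bL^2(LT)^{-\frac{2}{\beta}}\Big)T^{-\frac{\beta-1}{\beta}}.
\end{align*}

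The strategy is then to treat the three summands separately. The first term I would absorb into the target rate using $(\beta-1)/\beta<1$, whence $T^{-1}\leq T^{-(\beta-1)/\beta}$ for $T\geq 1$ and therefore $\tfrac{18G^2d}{\alpha T}\leq 18G^2\tfrac{d}{\alpha}T^{-(\beta-1)/\beta}$. The summand carrying $\cst_1 L^{2/\beta}$ is already of the desired form and needs no further work. The only term requiring the hypothesis on $T$ is the one carrying the extra dimension factor $d\bL^2(LT)^{-2/\beta}$, which is the unique place where $d$ appears with a power higher than one. Here I would invoke $T\geq (d\bL)^{\beta/2}L^{-2}$, which after raising to the power $-2/\beta$ gives $T^{-2/\beta}\leq (d\bL)^{-1}L^{4/\beta}$, and hence
\begin{align*}
d\bL^2(LT)^{-\frac{2}{\beta}}=d\bL^2 L^{-\frac{2}{\beta}}T^{-\frac{2}{\beta}}\leq d\bL^2 L^{-\frac{2}{\beta}}(d\bL)^{-1}L^{\frac{4}{\beta}}=\bL L^{\frac{2}{\beta}},
\end{align*}
a constant independent of $T,d,\alpha$.

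Collecting the three estimates delivers the claimed bound with $\cst=18G^2+\cst_1 L^{2/\beta}+\cst_2\bL L^{2/\beta}$, which depends only on $\beta,\sigma^2,G,L,\bL$ and not on $T,d,\alpha$. I do not expect any genuine obstacle in this argument: its entire content is the arithmetic of exponent balancing, and the single point deserving care is the verification that the prescribed threshold $T\geq(d\bL)^{\beta/2}L^{-2}$ is precisely what forces the dimension-quadratic term down to the linear-in-$d$ rate, so that the leading behavior is governed by the $\cst_1$-term and the rate $\tfrac{d}{\alpha}T^{-(\beta-1)/\beta}$ is exactly matched.
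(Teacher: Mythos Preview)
Your proposal is correct and matches the intended derivation: the paper states Corollary~\ref{cor2} without an explicit proof, and your argument---specializing $\bx$ to the minimizer, absorbing the $T^{-1}$ term into $T^{-(\beta-1)/\beta}$, and using the threshold $T\ge(d\bL)^{\beta/2}L^{-2}$ to collapse the $d\bL^2(LT)^{-2/\beta}$ factor to $\bL L^{2/\beta}$---is exactly the computation the corollary is meant to encode.
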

}
Similarly to Corollary~\ref{cor1}, we may note that that for $2\le \beta\le 3$ the condition 
$T\ge Cd^{\beta/2}$ in Corollary~\ref{cor2}, where $C>0$ is a constant, does not bring an additional restriction but indicates the meaningful 
range of $T$ since the bound 
is greater than a constant when $T\le Cd^{\beta/2}$. 

{ 
\begin{remark}
In view of strong convexity, Theorem \ref{thm2} and Corollary \ref{cor2} immediately imply the corresponding bounds on the estimation error $\Exp[\norm{\bar{\bx}_T - \bx^*}^2]$, where $\bx^*$ is the minimizer of $f$ on $\com$ provided it is a global minimizer. Thus, under the assumptions of Corollary \ref{cor2}, if $\nabla f(\bx^*)=0$ we have
\begin{align*}
   \Exp[\norm{\bar{\bx}_T - \bx^*}^2]\leq  2\cst \frac{d}{\alpha^2}T^{-\frac{\beta-1}{\beta}},
\end{align*}
where $\cst>0$ is the constant from Corollary \ref{cor2}. 
\end{remark}

}

It follows from Corollary~\ref{cor2} and the proofs of the lower bounds in \cite{akhavan2020,akhavan2023gradient} that, under non-restrictive conditions on the parameters of the problem, the rate $\frac{d}{\alpha}T^{-\frac{\beta-1}{\beta}}$ in Corollary~\ref{cor2}
is minimax optimal on the class of additive functions $f$ satisfying the assumptions of Theorem~\ref{thm2}. The lower bound that we need is not explicitly stated in \cite{akhavan2020,akhavan2023gradient} but follows immediately from the proofs in those papers since the lower bounds in \cite{akhavan2020,akhavan2023gradient} are obtained on additive functions. For completeness, we provide here the statement of the lower bound for additive functions based on \cite{akhavan2023gradient}.

Consider all strategies of choosing the query points as 
$\bz_t= \Phi_t\big(\left(\bz_i,y_i\right)_{i=1}^{t-1},\left(\bz'_i,y'_i\right)_{i=1}^{t-1},\tau_t\big)$ and $\bz'_t= \Phi'_t\big(\left(\bz_i,y_i\right)_{i=1}^{t-1},\left(\bz'_i,y'_i\right)_{i=1}^{t-1},\tau_t\big)$ for $t\ge 2$, where $\Phi_t$'s and $\Phi'_t$'s are measurable functions, $\bz_1, \bz'_1 \in \mathbb{R}^d$
are any random variables, and $\{\tau_t\}$ is a sequence of random variables with values in a measurable space $(\mathcal Z, \mathcal U)$, such that $\tau_t$ is independent of $\big(\left(\bz_i,y_i\right)_{i=1}^{t-1},\left(\bz'_i,y'_i\right)_{i=1}^{t-1}\big)$. We denote by $\Pi_T$ the set of all such strategies of choosing query points up to $t=T$.
The class $\Pi_T$ includes the sequential strategy of the algorithm of Section~\ref{sec:problem setup} with the 
gradient estimator~\eqref{eq:gradest}. In this case, $\tau_t=\br_t$, $\bz_t=\bx_t+h_t\br_t$ and $\bz'_t=\bx_t-h_t\br_t$. 

The lower bound of \cite{akhavan2023gradient} that we are using here is proved under the following assumption on the noises~$(\xi,\xi'_t)$. Let $H^2(\cdot, \cdot)$ be the squared Hellinger distance defined, for two probability measures $\mathbf{P}, \mathbf{P}'$ on a measurable space $(\Omega, \mathcal{A})$, as 
\[
H^2(\mathbf{P}, \mathbf{P}') \triangleq \int (\sqrt{\d \mathbf{P}} - \sqrt{\d \mathbf{P}'})^2\enspace.
\]

\begin{assumption}
\label{ass:lower-bound}
For every $t\ge 1$, the following holds:
\begin{itemize}
    \item The cumulative distribution function $F_t : \bbR^2 \to \bbR$ of random variable $(\xi_t,\xi'_t)$ 
is such that
\begin{equation}
\label{distribution}
H^2 (P_{F_t(\cdot,\cdot)}, P_{F_t(\cdot+v,\cdot+w)})\leq I_{0}\max(v^2,w^2)\,, \quad\quad |v|, |w|\leq v_{0},
\end{equation}
for some $0<I_{0}<\infty$, $0<v_{0}\leq \infty$. Here, $P_{F(\cdot,\cdot)}$ denotes the probability measure corresponding to the cumulative distribution function $F(\cdot,\cdot)$.
\item The random variable $(\xi_t,\xi'_t)$ is independent of $((\bz_i,y_i)_{i=1}^{t-1},(\bz'_i,y'_i)_{i=1}^{t-1},\tau_t)$. 
\end{itemize}
\end{assumption}
Let $\com=\{\bx\in\mathbb{R}^d\,:\, \norm{\bx}\le 1\}$.  Fix $\alpha, L, \bar L>0,G>\alpha$, $ \beta\ge 2$,  and denote by  $\mathcal{F}$ the set of all functions $f$ that satisfy the assumptions of Theorem~\ref{thm2} and attain their minimum over $\mathbb{R}^d$ in $\com$. 

\begin{theorem}
\label{thm:lb}
Let $\com=\{\bx\in\mathbb{R}^d\,:\, \norm{\bx}\le 1\}$ and let Assumption~\ref{ass:lower-bound} hold. Assume that $\alpha>T^{-1/2+1/\beta}$ and $T\ge d^{\beta}$.
{Then, for any estimator $\tilde\bx_T$ based on the observations $((\bz_t,y_t), (\bz'_t,y'_t), t=1,\dots, T)$, where $((\bz_t,\bz'_t), t=1,\dots, T)$ are obtained by any strategy in the class $\Pi_T$ we have}
\begin{equation}\label{eq1:lb}
\sup_{f \in \mathcal{F}}\Exp\big[f(\tilde\bx_T)-\min_{\bx\in \com}f(\bx)\big]\geq C\frac{d}{\alpha}T^{-\frac{\beta-1}{\beta}},
\end{equation}
 where  $C>0$ is a constant that does not depend  of $T,d$, and $\alpha$. 
\end{theorem}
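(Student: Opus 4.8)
The plan is to prove the lower bound by reduction to a multiple hypothesis testing problem over a hypercube of additive functions, combining Assouad's lemma with the Hellinger-based information bound of Assumption~\ref{ass:lower-bound}, following the strategy of \cite{akhavan2020,akhavan2023gradient}. For each $\omega=(\omega_1,\dots,\omega_d)\in\{-1,1\}^d$ I would construct an additive candidate $f_\omega(\bx)=\sum_{j=1}^{d}v_{\omega_j}(x_j)$, where every one-dimensional component is a strongly convex quadratic plus a small smooth bump whose sign is toggled by $\omega_j$: $v_{\pm}(u)=\tfrac{\alpha}{2}u^2\mp a\,\varphi(u/h)$, with $\varphi$ a fixed infinitely differentiable, odd, $[-1,1]$-supported profile and $a>0$, $h\in(0,1)$ to be chosen. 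The sign convention is arranged so that the unique minimizer of $v_{\omega_j}$ sits at $\omega_j\mu$ with $\mu\asymp a/(\alpha h)$; hence $f_\omega$ is minimized at $\bx^*_\omega=\mu\,\omega$, and flipping a single coordinate moves the minimizer by $2\mu$ in that coordinate.

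First I would reduce the optimization error to recovery of $\omega$. By $\alpha$-strong convexity, $f_\omega(\bx)-\min f_\omega\ge\tfrac{\alpha}{2}\norm{\bx-\bx^*_\omega}^2=\tfrac{\alpha}{2}\sum_{j}(x_j-\omega_j\mu)^2$ for every $\bx$. For any estimator $\tilde\bx_T$, defining $\hat\omega_j=\mathrm{sign}(\tilde x_{T,j})$ gives $(\tilde x_{T,j}-\omega_j\mu)^2\ge\mu^2\,\mathbf{1}\{\hat\omega_j\neq\omega_j\}$, so $f_\omega(\tilde\bx_T)-\min f_\omega\ge\tfrac{\alpha\mu^2}{2}\sum_{j}\mathbf{1}\{\hat\omega_j\neq\omega_j\}$. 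Since $\bx^*_\omega\in\com$ (ensured below), $\min_{\com}f_\omega=\min_{\mathbb{R}^d}f_\omega$. Taking the supremum over $\omega$ and applying Assouad's lemma bounds the quantity from below by $c\,\alpha\mu^2 d$, provided each pair of neighbouring hypotheses (differing in a single coordinate) is statistically indistinguishable, i.e. the corresponding squared Hellinger distance is bounded away from its maximal value.

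Second comes the information bound. Flipping $\omega_j$ alters $f_\omega$ only through the $j$-th coordinate, and the induced change of the function at any point is $|v_+(z)-v_-(z)|=2a|\varphi(z/h)|\le 2a\|\varphi\|_\infty$. Under a strategy in $\Pi_T$ each round yields two observations whose laws under neighbouring hypotheses are shifted by at most $2a\|\varphi\|_\infty$, and Assumption~\ref{ass:lower-bound} bounds the per-observation squared Hellinger distance by $I_0(2a\|\varphi\|_\infty)^2$ once $2a\|\varphi\|_\infty\le v_0$. Because the queries are sequential and randomized, the two joint laws are not product measures; I would therefore control the total squared Hellinger distance by the chain-rule/tensorization argument of \cite{akhavan2023gradient}, conditioning on the common history and randomization $\tau_t$ so that the per-round increments add, giving $H^2\lesssim T\,I_0\,a^2$. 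This adaptivity step is the main obstacle, and it is precisely the ingredient inherited from \cite{akhavan2023gradient}.

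Finally I would calibrate the parameters. Membership $v_\pm\in\mathcal{F}_\beta(L)$ forces $a\asymp L h^\beta$ (the bump's $\beta$-H\"older seminorm scales like $a h^{-\beta}$), while $\mathcal{F}_2'(\bL)$-smoothness together with $\alpha$-strong convexity requires $a h^{-2}\lesssim\alpha$, i.e. $L h^{\beta-2}\lesssim\alpha$; with the choice $h\asymp T^{-1/(2\beta)}$ this is exactly the hypothesis $\alpha>T^{-1/2+1/\beta}$. That choice of $h$ makes $H^2\lesssim T L^2 h^{2\beta}$ a constant $<1$, keeps the shift below $v_0$ for large $T$, and yields $\mu\asymp\tfrac{L}{\alpha}h^{\beta-1}$; the condition $T\ge d^\beta$ guarantees $\norm{\bx^*_\omega}=\sqrt d\,\mu\le 1$, so each $f_\omega$ attains its minimum in $\com$, and the remaining requirements (in particular $\max_{\com}\norm{\nabla f_\omega}\le G$, using $G>\alpha$) are routine constant checks for $T$ large. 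Substituting $\mu^2\asymp\tfrac{L^2}{\alpha^2}h^{2\beta-2}=\tfrac{L^2}{\alpha^2}T^{-(\beta-1)/\beta}$ into the Assouad bound gives $\sup_{f\in\mathcal{F}}\Exp\big[f(\tilde\bx_T)-\min_{\com}f\big]\gtrsim\alpha\mu^2 d\asymp\tfrac{d}{\alpha}T^{-(\beta-1)/\beta}$, which is \eqref{eq1:lb} with a constant $C$ depending only on $\beta,L,\bL,G,I_0,v_0$ but not on $T,d,\alpha$.
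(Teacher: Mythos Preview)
Your proposal is correct and follows exactly the route the paper relies on. The paper itself gives no self-contained argument: it simply observes that the hard family in the proof of Theorem~22 of \cite{akhavan2023gradient} is additive and hence lies in $\mathcal{F}$, quotes the resulting bound $C\min\big(\max(\alpha,T^{-1/2+1/\beta}),\,d/\sqrt{T},\,\tfrac{d}{\alpha}T^{-(\beta-1)/\beta}\big)$, and checks that under $\alpha>T^{-1/2+1/\beta}$ and $T\ge d^\beta$ this minimum equals $\tfrac{d}{\alpha}T^{-(\beta-1)/\beta}$. Your hypercube construction, the strong-convexity reduction to coordinate-wise sign recovery, the Assouad step, the Hellinger chain-rule for adaptive queries, and the calibration $h\asymp T^{-1/(2\beta)}$, $a\asymp Lh^\beta$ are precisely the ingredients of that cited proof, so you are effectively unpacking what the paper imports wholesale.
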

Theorem \ref{thm:lb} follows immediately from the proof of Theorem~22 in~\cite{akhavan2023gradient} since the family of functions used there belongs to the class $\mathcal{F}$. The lower bound of Theorem~22 in~\cite{akhavan2023gradient} has the form 
$$
C\min\left(\max(\alpha,T^{-1/2+1/\beta}), \frac{d}{\sqrt{T}}, \frac{d}{\alpha}T^{-\frac{\beta-1}{\beta}}\right),
$$
which reduces to $C\frac{d}{\alpha}T^{-\frac{\beta-1}{\beta}}$ under the assumptions on $T,d$ and $\alpha$ used in Theorem~\ref{thm:lb}.

{ 
\begin{remark}
     Since the strong convexity and the PL property hold for each additive component of $f$, another possible approach would be to run the procedure component-wise (minimize separately each component~$f_j$). However, it leads to a worse result. Indeed, in this case we need to make at each step $2d$ queries in parallel (two queries for each component) and thus can make only $\sim T/d$ steps if  the total budget of queries is $T$. At the end, applying Theorems~\ref{thm1} or~\ref{thm2} in the one-dimensional case, for each component we obtain the error in $T$ and $d$ of the order $(T/d)^{-(\beta-1)/\beta}$. This rate cannot be improved as follows from the one-dimensional instance of Theorem~\ref{thm:lb}. Summing up over the $d$ components, the overall error will be of the order 
$d (T/d)^{-(\beta-1)/\beta} = d^{2-1/\beta} T^{-(\beta-1)/\beta}$,
that is, the error will depend on $d$ in a sub-optimal way. 
\end{remark}

}

\section{Proofs}

We start by proving some auxiliary lemmas.
\begin{lemma}\label{lem:Aux1}
Let $f:\mathbb{R}^d\to\mathbb{R}$ be a differentiable function such that $\norm{\nabla f(\bx)- \nabla f(\bx')}\le \bL \norm{\bx-\bx'}$ for all $\bx,\bx'\in \mathbb{R}^d$,   
where $\bL>0$. Let Assumption \ref{ass:noise} hold and let $\{\bx_t\}_{t=1}^{T}$ be the realization of { Algorithm~\ref{alg:ZO_SGD}} with {\color{black}$\com = \mathbb{R}^d$}. Then, for all $t\in[T]$ we have
    \begin{align}\label{eq:Auxlem1}
    \Exp\left[f(\bx_{t+1})|\bx_t\right]
     &\leq f(\bx_t) -\frac{\eta_t}{2}\norm{\nabla f(\bx_t)}^2 +\frac{\eta_t}{2}\norm{\Exp\left[\bg_t|\bx_t\right]-\nabla f(\bx_t)}^2+\frac{\bar{L} \eta_t^2}{2}\Exp\left[\norm{\bg_{t}}^2|\bx_t\right].
    \end{align}
\end{lemma}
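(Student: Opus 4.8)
The plan is to start from the standard descent inequality for $\bar L$-smooth functions and then reorganize the error terms to isolate the bias and second-moment contributions of the gradient estimator. Since $f$ has $\bar L$-Lipschitz gradient (which is exactly the hypothesis $\|\nabla f(\bx)-\nabla f(\bx')\|\le \bar L\|\bx-\bx'\|$), the descent lemma gives, for the unconstrained update $\bx_{t+1}=\bx_t-\eta_t\bg_t$,
\begin{align*}
f(\bx_{t+1})\le f(\bx_t)+\langle \nabla f(\bx_t),\bx_{t+1}-\bx_t\rangle+\frac{\bar L}{2}\|\bx_{t+1}-\bx_t\|^2 = f(\bx_t)-\eta_t\langle \nabla f(\bx_t),\bg_t\rangle+\frac{\bar L\eta_t^2}{2}\|\bg_t\|^2.
\end{align*}
First I would take the conditional expectation $\Exp[\,\cdot\,|\bx_t]$ of both sides. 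The last term directly produces $\tfrac{\bar L\eta_t^2}{2}\Exp[\|\bg_t\|^2|\bx_t]$, so the only real work is handling the inner-product term $-\eta_t\langle \nabla f(\bx_t),\Exp[\bg_t|\bx_t]\rangle$.

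The key step is to rewrite the cross term using the bias $\bb_t:=\Exp[\bg_t|\bx_t]-\nabla f(\bx_t)$. Writing $\Exp[\bg_t|\bx_t]=\nabla f(\bx_t)+\bb_t$, I get
\begin{align*}
-\eta_t\langle \nabla f(\bx_t),\Exp[\bg_t|\bx_t]\rangle = -\eta_t\|\nabla f(\bx_t)\|^2-\eta_t\langle \nabla f(\bx_t),\bb_t\rangle.
\end{align*}
To control the last inner product I would apply the elementary inequality $\langle \ba,\bb\rangle\ge -\tfrac12\|\ba\|^2-\tfrac12\|\bb\|^2$ (equivalently, Young's inequality), which yields $-\eta_t\langle \nabla f(\bx_t),\bb_t\rangle\le \tfrac{\eta_t}{2}\|\nabla f(\bx_t)\|^2+\tfrac{\eta_t}{2}\|\bb_t\|^2$. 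Substituting this back, the coefficient of $\|\nabla f(\bx_t)\|^2$ becomes $-\eta_t+\tfrac{\eta_t}{2}=-\tfrac{\eta_t}{2}$, and the bias term $\tfrac{\eta_t}{2}\|\bb_t\|^2=\tfrac{\eta_t}{2}\|\Exp[\bg_t|\bx_t]-\nabla f(\bx_t)\|^2$ appears exactly as in the claimed bound.

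Collecting the three surviving terms — the preserved $f(\bx_t)$, the negative $-\tfrac{\eta_t}{2}\|\nabla f(\bx_t)\|^2$, the bias $\tfrac{\eta_t}{2}\|\Exp[\bg_t|\bx_t]-\nabla f(\bx_t)\|^2$, and the second-moment $\tfrac{\bar L\eta_t^2}{2}\Exp[\|\bg_t\|^2|\bx_t]$ — reproduces \eqref{eq:Auxlem1}. I expect this lemma to be essentially a routine manipulation: the only mild subtlety is that the cross term must be split between the sign-definite gradient term and the bias, and the Young's inequality constant must be chosen so that exactly half of the gradient norm is cancelled, leaving the clean $-\tfrac{\eta_t}{2}$ coefficient. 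There is no genuine obstacle here; the real difficulty of the overall argument is deferred to subsequent lemmas that must bound the bias $\|\Exp[\bg_t|\bx_t]-\nabla f(\bx_t)\|$ (using the $\beta$-H\"older property and the moment conditions \eqref{ass:K} on the kernel $K$) and the second moment $\Exp[\|\bg_t\|^2|\bx_t]$ (which will scale with $d$ and $\sigma^2/h_t^2$).
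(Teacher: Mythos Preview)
Your proposal is correct and follows essentially the same approach as the paper: both start from the $\bar L$-smooth descent inequality, isolate the cross term $-\eta_t\langle\nabla f(\bx_t),\Exp[\bg_t|\bx_t]-\nabla f(\bx_t)\rangle$, and absorb it via $2ab\le a^2+b^2$ (the paper routes this through Cauchy--Schwarz first, whereas you apply Young's inequality directly, a cosmetic difference).
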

\begin{proof}
 The assumption on $f$ implies that
\begin{align*}
    f(\bx_{t+1}) &= f(\bx_t -\eta_t\bg_t) 
  \leq f(\bx_t) -\eta_t\langle \nabla f(\bx_t), \bg_t\rangle +\frac{\bar{L} \eta_t^2}{2}\norm{\bg_t}^2.
\end{align*}
By adding and subtracting $\eta_t\norm{\nabla f(\bx_t)}^2$ we obtain 
\begin{align*}
f(\bx_{t+1})
&\leq f(\bx_t) -\eta_t\norm{\nabla f(\bx_t)}^2-\eta_t\langle \nabla f(\bx_t), \ \bg_t-\nabla f(\bx_t)\rangle +\frac{\bar{L} \eta_t^2}{2}\norm{\bg_{t}}^2.
     \end{align*}
 {\color{black}Taking the conditional expectation gives 
\begin{align*}
\Exp\left[f(\bx_{t+1})|\bx_t\right]
&\leq f(\bx_t) -\eta_t\norm{\nabla f(\bx_t)}^2-\eta_t\langle \nabla f(\bx_t), \ \Exp\left[\bg_t|\bx_t\right]-\nabla f(\bx_t)\rangle +\frac{\bar{L} \eta_t^2}{2}\Exp\left[\norm{\bg_{t}}^2|\bx_t\right]
\\&\leq f(\bx_t) -\eta_t\norm{\nabla f(\bx_t)}^2 +\eta_t \norm{\nabla f(\bx_t)}\norm{\Exp\left[\bg_t|\bx_t\right] - \nabla f(\bx_t)}+\frac{\bar{L} \eta_t^2}{2}\Exp\left[\norm{\bg_{t}}^2|\bx_t\right].
     \end{align*}
} 
The lemma follows by using the inequality $2ab\leq a^2 + b^2$, $\forall a,b\in\mathbb{R}$. 
\end{proof}

{\color{black}\begin{lemma}\label{lem:bias}
    For $j\in[d]$, let $f_j:\mathbb{R}\to\mathbb{R}$ be such that $f_j\in \mathcal{F}_\beta(L)$, where $\beta\geq 2$ and $L>0$. Assume that $f:\mathbb{R}^{d}\to\mathbb{R}$ satisfies the additive model $f(\bx) = \sum_{j=1}^{d}f_j(x_j)$. Let Assumption \ref{ass:noise}(i) hold. Then, for all $t\in[T]$ we have
    \begin{align*}
        \norm{\Exp\left[\bg_t | \bx_t\right] - \nabla f(\bx_t)}\leq \kappa_{\beta} L\sqrt{d}h_t^{\beta-1}.
    \end{align*}
\end{lemma}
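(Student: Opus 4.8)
The plan is to compute the conditional expectation $\Exp[\bg_t \mid \bx_t]$ coordinate by coordinate and compare it with $\nabla f(\bx_t) = \big(f_1'(x_{t,1}),\dots,f_d'(x_{t,d})\big)$, reducing everything to a one–dimensional kernel-smoothing estimate thanks to additivity. Writing $g_{t,j} = \frac{1}{2h_t}\big(f(\bx_t + h_t\br_t) - f(\bx_t - h_t\br_t) + \xi_t - \xi'_t\big)K(r_{t,j})$, I would first dispose of the noise. By Assumption~\ref{ass:noise}(i) the variable $r_{t,j}$ is independent of $\bx_t$ and conditionally independent of $(\xi_t,\xi'_t)$ given $\bx_t$, so $\Exp[(\xi_t - \xi'_t)K(r_{t,j}) \mid \bx_t] = \Exp[\xi_t - \xi'_t \mid \bx_t]\,\Exp[K(r_{t,j})]$, and this vanishes because the zeroth-moment condition in \eqref{ass:K} gives $\Exp[K(r_{t,j})] = 0$. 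Hence only the deterministic increment of $f$ contributes to the bias.

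Next I would exploit additivity. Since $f(\bx_t \pm h_t\br_t) = \sum_{k=1}^d f_k(x_{t,k} \pm h_t r_{t,k})$ and the coordinates $r_{t,k}$ are mutually independent, for every $k \neq j$ the factor $K(r_{t,j})$ is independent of the $k$-th increment, so
\[
\Exp\big[\big(f_k(x_{t,k} + h_t r_{t,k}) - f_k(x_{t,k} - h_t r_{t,k})\big)K(r_{t,j}) \mid \bx_t\big] = \Exp[\,\cdot\,]\,\Exp[K(r_{t,j})] = 0,
\]
again by $\Exp[K(r_{t,j})] = 0$. Thus all cross terms drop and
\[
\Exp[g_{t,j} \mid \bx_t] = \frac{1}{2h_t}\,\Exp\big[\big(f_j(x_{t,j} + h_t r_{t,j}) - f_j(x_{t,j} - h_t r_{t,j})\big)K(r_{t,j})\big].
\]
This decoupling is the crux: it turns the $d$-dimensional bias into $d$ independent one-dimensional problems, which is exactly what produces the favourable $\sqrt{d}$ factor rather than a factor $d$.

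For each fixed $j$ I would then run the standard kernel argument in one variable. Using $f_j \in \mathcal{F}_\beta(L)$, I Taylor-expand $f_j$ around $x_{t,j}$ to order $\ell = \lfloor\beta\rfloor$; the antisymmetric difference $f_j(x_{t,j} + h_t r) - f_j(x_{t,j} - h_t r)$ annihilates all even-order terms, while the moment conditions $\int u^m K(u)\,du = 0$ for $m = 2,\dots,\ell$ from \eqref{ass:K} annihilate the odd polynomial terms of order $\ge 3$ after integration against $K$. The normalization $\int u K(u)\,du = 1$ then calibrates the surviving first-order term to exactly $f_j'(x_{t,j})$, so the per-coordinate bias equals $\frac{1}{2h_t}\Exp[(R_+ - R_-)K(r_{t,j})]$, where $R_\pm$ are the two Taylor remainders. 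The Hölder bound \eqref{eq:Hclass} gives $|R_\pm| \le L|h_t r|^\beta$, whence $|\Exp[g_{t,j} \mid \bx_t] - f_j'(x_{t,j})| \le L\kappa_\beta h_t^{\beta-1}$, with $\kappa_\beta = \int |u|^\beta |K(u)|\,du$.

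Finally I would assemble the coordinates: since each of the $d$ components of $\Exp[\bg_t \mid \bx_t] - \nabla f(\bx_t)$ is bounded in absolute value by $L\kappa_\beta h_t^{\beta-1}$, the Euclidean norm is at most $\sqrt{d}\,L\kappa_\beta h_t^{\beta-1}$, which is the claimed bound. I expect the only genuine subtlety to be the coordinate decoupling in the second step, namely verifying carefully that the independence of the randomizing coordinates together with $\int K = 0$ kills every off-diagonal contribution; the remaining one-dimensional expansion is routine bookkeeping with the moment conditions and the Hölder remainder.
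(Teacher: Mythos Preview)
Your proposal is correct and follows essentially the same route as the paper's proof: eliminate the noise via $\int K=0$ and conditional independence, use additivity plus independence of the $r_{t,k}$ to kill the off-diagonal terms, then run the one-dimensional Taylor expansion with the moment conditions on $K$ to isolate $f_j'(x_{t,j})$ and bound the remainder by $L\kappa_\beta h_t^{\beta-1}$, and finally aggregate over coordinates with a $\sqrt d$. The only cosmetic difference is that the paper writes the antisymmetric Taylor expansion explicitly with odd terms and then invokes the moment conditions, whereas you phrase the cancellation more verbally; the substance is identical.
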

\begin{proof}
    Using Assumption \ref{ass:noise}(i) for any $j\in[d]$ and $t\in[T]$ we have $\Exp(K(r_{t,j}))=0$, $\Exp\left[\xi_tK(r_{t,j})|\bx_t\right]=0$ and $\Exp\left[\xi'_tK(r_{t,j})|\bx_t\right]=0$. Thus,
\begin{align*}
  \Exp\left[g_{t,j}|\bx_t\right]&= \frac{1}{2h_t}\Exp\left[(f_j(x_{t,j} + h_t r_{t,j})-f_j(x_{t,j} - h_t r_{t,j}))K(r_{t,j})|\bx_t\right] \\
  & \qquad + \frac{1}{2h_t}\sum_{m\ne j}\Exp\left[(f_m(x_{t,m} + h_t r_{t,m})-f_m(x_{t,m} - h_t r_{t,m}))K(r_{t,j})|\bx_t\right]
  \\ 
  &= \frac{1}{2h_t}\Exp\left[(f_j(x_{t,j} + h_t r_{t,j})-f_j(x_{t,j} - h_t r_{t,j}))K(r_{t,j})|\bx_t\right], 
\end{align*}
where we have used the fact that $\Exp(K(r_{t,j}))=0$ and $r_{t,j}$ is independent of $r_{t,m},$ for $m\ne j$, and of $\bx_t$. 
From Taylor expansion we obtain that
\begin{align*}
\frac{1}{2h_t}\left(f_j(x_{t,j} + h_t r_{t,j})-f_j(x_{t,j} - h_t r_{t,j})\right) &= f'_j(x_{t,j})r_{t,j} +\frac{1}{h_t}\sum_{1\leq m \leq \ell, \,m \text{  odd}}\frac{h_t^m}{m!}f^{(m)}_j(x_{t,j})r_{t,j}^m \\&\quad\quad+ \frac{R(h_tr_{t,j}) - R(-h_tr_{t,j})}{2h_t},
\end{align*}
where $|R(-h_tr_{t,j})|,|R(h_tr_{t,j})|\leq L|r_{t,j}|^{\beta}h_t^{\beta}$. Multiplying both sides of this inequality by $K(r_{t,j})$ and taking the conditional expectation implies
\begin{align*}
    |\Exp\left[g_{t,j}|{ \bx_{t}}\right] - f'_j(x_{t,j})|\leq L\Exp\left[|r_{t,j}|^{\beta}K(r_{t,j})\right]h_t^{\beta-1} = \kappa_{\beta}Lh_{t}^{\beta-1}.
\end{align*}
{ The result of the lemma follows from this inequality and the fact that \begin{align*}
        \norm{\Exp\left[\bg_t | \bx_t\right] - \nabla f(\bx_t)}\leq \sqrt{d} \max_{j=1,\dots,d} |\Exp\left[g_{t,j}|\bx_{t}\right] - f'_j(x_{t,j})|.        
    \end{align*}
    }
\end{proof}
}

{\color{black}\begin{lemma}\label{lem:var}
    For $j\in[d]$, let $f_j:\mathbb{R}\to\mathbb{R}$ be such that $f_j\in \mathcal{F}_2'(\bL)$, where  $\bL>0$. Assume that $f:\mathbb{R}^{d}\to\mathbb{R}$ satisfies the additive model $f(\bx) = \sum_{j=1}^{d}f_j(x_j)$. Let Assumption \ref{ass:noise} hold. Then, for all $t\in[T]$ we have
  \begin{align*}
 \Exp\left[\norm{\bg_{t}}^2|\bx_t\right] &\leq \frac{3}{2}\kappa d\left(\frac{3}{4}\left(d\bL^2 h_t^2 + 8\norm{\nabla f(\bx_t)}^2\right) + \frac{\sigma^2}{h_t^2}\right).
\end{align*}
\end{lemma}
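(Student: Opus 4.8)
The plan is to bound the conditional second moment directly from the definition \eqref{eq:gradest}, separating the contribution of the noise from that of the function values. Writing $\zeta_t = \xi_t - \xi'_t$ and $\Delta_t = f(\bx_t + h_t\br_t) - f(\bx_t - h_t\br_t)$, we have $g_{t,j} = \frac{1}{2h_t}(\Delta_t + \zeta_t)K(r_{t,j})$, so that $\norm{\bg_t}^2 = \frac{1}{4h_t^2}(\Delta_t + \zeta_t)^2\sum_{j=1}^d K^2(r_{t,j})$. I would first apply the elementary inequality $(a+b)^2 \le \tfrac32 a^2 + 3b^2$ (which is just $\tfrac12(a-2b)^2\ge0$) to split $(\Delta_t + \zeta_t)^2$ into a ``signal'' part $\tfrac32\Delta_t^2$ and a ``noise'' part $3\zeta_t^2$, then take the conditional expectation given $\bx_t$, using Assumption~\ref{ass:noise}(i) to factor the noise out of the randomization.

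For the noise part, the conditional independence of $(\xi_t,\xi'_t)$ and $r_{t,j}$ given $\bx_t$ gives $\Exp[\zeta_t^2 K^2(r_{t,j})\mid\bx_t] = \Exp[\zeta_t^2\mid\bx_t]\,\Exp[K^2(r_{t,j})]$, where $\Exp[K^2(r_{t,j})] = \tfrac12\int_{-1}^1 K^2(u)\,du = \kappa/2$ since $r_{t,j}\sim U[-1,1]$. Combined with $\Exp[\zeta_t^2\mid\bx_t]\le 4\sigma^2$ (from Assumption~\ref{ass:noise}(ii) and $(\xi_t-\xi'_t)^2\le 2\xi_t^2 + 2(\xi'_t)^2$) and summation over the $d$ coordinates, this produces exactly the term $\tfrac32\kappa d\,\sigma^2/h_t^2$.

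For the signal part I would exploit the additive structure, $\Delta_t = \sum_{m=1}^d D_{t,m}$ with $D_{t,m} = f_m(x_{t,m}+h_t r_{t,m}) - f_m(x_{t,m}-h_t r_{t,m})$. Using $f_m\in\mathcal{F}_2'(\bL)$, a first-order Taylor expansion with integral remainder yields $D_{t,m} = 2h_t r_{t,m}f_m'(x_{t,m}) + \rho_{t,m}$ with $|\rho_{t,m}|\le \bL h_t^2 r_{t,m}^2$. After bounding $\Delta_t^2\le 2S_t^2 + 2\rho_t^2$ with $S_t = 2h_t\sum_m r_{t,m}f_m'(x_{t,m})$ and $\rho_t=\sum_m\rho_{t,m}$, the expectations $\Exp[S_t^2 K^2(r_{t,j})\mid\bx_t]$ and $\Exp[\rho_t^2 K^2(r_{t,j})\mid\bx_t]$ are evaluated using the independence of the $r_{t,m}$, the moments $\Exp[r_{t,m}^2]=1/3$, $\Exp[r_{t,m}^4]=1/5$, the fact that $|r_{t,m}|\le 1$, and the identity $\sum_m (f_m'(x_{t,m}))^2 = \norm{\nabla f(\bx_t)}^2$ valid for the additive model. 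Summing over $j$ and collecting terms then produces the $d\bL^2 h_t^2$ and $\norm{\nabla f(\bx_t)}^2$ contributions, within the stated constants.

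The one place that requires care — and the main obstacle — is the remainder $\rho_t = \sum_m \rho_{t,m}$. The naive triangle-inequality bound $|\rho_t|\le \bL h_t^2\sum_m r_{t,m}^2 \le d\bL h_t^2$ would make $\rho_t^2$ of order $d^2$ and, after summation over $j$, inflate the final bound by a spurious factor of $d$. The resolution is to observe that each $\rho_{t,m}$ is an \emph{odd} function of $r_{t,m}$ (both $r\mapsto f_m(x+h_t r) - f_m(x-h_t r)$ and $r\mapsto 2h_t r f_m'(x)$ are odd), hence has conditional mean zero by the symmetry of $U[-1,1]$, and that the $\rho_{t,m}$ are conditionally independent across $m$. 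Consequently all cross terms in $\Exp[\rho_t^2 K^2(r_{t,j})\mid\bx_t]$ vanish except the diagonal ones and the single index $m=j$, leaving a sum of $d$ variances of order $\bL^2 h_t^4$, which keeps the correct linear-in-$d$ dependence per coordinate. The same mean-zero/independence argument eliminates the cross terms in $\Exp[S_t^2 K^2(r_{t,j})\mid\bx_t]$. Assembling the three contributions and using $d\ge 1$ to simplify the constants yields the stated bound.
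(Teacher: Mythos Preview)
Your proposal is correct and follows essentially the same route as the paper: both hinge on the observation that each per-coordinate increment is an odd function of the symmetric variable $r_{t,m}$, hence has conditional mean zero, and that independence across coordinates then collapses the square of the sum to a sum of squares, avoiding the spurious extra factor of $d$. The only cosmetic difference is the order of operations: the paper applies this symmetry argument directly to the full increments $G_m=f_m(x_{t,m}+h_tr_{t,m})-f_m(x_{t,m}-h_tr_{t,m})$ to reduce $\Exp[(\sum_m G_m)^2K^2(r_{t,j})\mid\bx_t]$ to $\sum_m\Exp[G_m^2K^2(r_{t,j})\mid\bx_t]$ and \emph{then} Taylor-expands each $G_m^2$, whereas you Taylor-expand first into $S_t+\rho_t$ and invoke the same symmetry on each piece.
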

\begin{proof}
For $i\in[d]$ we define
\begin{align*}
    G_i = f_i(x_{t,i} + h_tr_{t,i}) - f_i(x_{t,i} - h_tr_{t,i}).
\end{align*}
We have
\begin{align*}
    \Exp\left[g^2_{t,j}|\bx_t\right] &=\frac{1}{4h_t^2}\Exp\left[\left(\sum_{i=1}^{d}G_i +\xi_t - \xi'_t\right)^2K^2(r_{t,j})|\bx_t\right]
\end{align*}
Note that $r_{t,i}$ and $-r_{t,i}$ have the same distribution. Therefore, $\Exp[G_i|\bx_{t}] = 0 $ and we can write
\begin{align*}
    \Exp\left[g^2_{t,j}|\bx_t\right] &\leq\frac{3}{4h_t^2}\Exp\left[\left(\left(\sum_{i=1}^{d}G_i\right)^2 +\xi_t^2 + (\xi'_t)^2)\right)K^2(r_{t,j})|\bx_t\right]
    \\& \leq \frac{3}{4h_t^2}\Exp\left[\sum_{i=1}^{d}G_i^2  K^2(r_{t,j}) + \sum_{i,k=1,i\neq k}^{d}G_iG_kK^2(r_{t,j})|\bx_t\right] + \frac{3\sigma^2\kappa}{2h_t^2}
    \\&=\frac{3}{4h_t^2}\Exp\left[\sum_{i=1}^{d}G_i^2  K^2(r_{t,j}) |\bx_t\right] + \frac{3\sigma^2\kappa}{2h_t^2}.
\end{align*}
Since $f_i\in\mathcal{F}_2'(\bL)$  we have that, for all $i\in[d]$,
\begin{align*}
    G_i^2 &= \left(\left(f_i(x_{t,i}+h_tr_{t,i}) - f(x_{t,i}) - f'_i(x_{t,i})h_tr_{t,i}\right) - \left(f_i(x_{t,i}-h_tr_{t,i}) - f(x_{t,i})+ f'_i(x_{t,i})h_tr_{t,i}\right) \right.
    \\ & \qquad \left. + \, 2f'_i(x_{t,i})h_tr_{t,i}\right)^2
    \\&\leq 3\left(\left(f_i(x_{t,i}+h_tr_{t,i}) -  f(x_{t,i})-f'_i(x_{t,i})h_tr_{t,i}\right)^2 + \left(f_i(x_{t,i}-h_tr_{t,i}) - f(x_{t,i})+ f'_i(x_{t,i})h_tr_{t,i}\right)^2 \right.
    \\ & \qquad \left.+\, 4\left(f'_i(x_{t,i})h_tr_{t,i}\right)^2\right)
    \\&\leq 3\left(\frac{\bL^2}{2}h_t^4+4\left(f'_i(x_{t,i})\right)^2h^2_t\right).
\end{align*}
Thus,
\begin{align*}
        \Exp\left[g^2_{t,j}|\bx_t\right] 
        &\leq \frac{9}{4}\kappa\left(\frac{d\bL^2}{2}h_t^2 + 4\sum_{i=1}^{d}(f'_i(x_{t,i}))^2\right)+ \frac{3\sigma^2\kappa}{2h_t^2},
\end{align*}
which implies the lemma.
\end{proof}

}

\begin{lemma}\label{lem:Aux2}
 For $j\in[d]$, let $f_j:\mathbb{R}\to\mathbb{R}$ be such that $f_j\in \mathcal{F}_2'(\bL)\cap\mathcal{F}_\beta(L)$, where $\beta\geq 2$ and $\bL,L>0$. Assume that $f:\mathbb{R}^{d}\to\mathbb{R}$ satisfies the additive model $f(\bx) = \sum_{j=1}^{d}f_j(x_j)$. Let Assumption \ref{ass:noise} hold. Assume that $\{\bx_t\}_{t=1}^{T}$ is the realization of { Algorithm \ref{alg:ZO_SGD}} with {\color{black}$\com = \mathbb{R}^d$} and with the gradient estimators defined by \eqref{eq:gradest}. Then, for all $t\in[T]$ we have
    \begin{align*}
          \Exp\left[f(\bx_{t+1})|\bx_t\right] 
\leq f(\bx_t) -\frac{\eta_t}{2}\left(1-9\bar{L}{\color{black}d}\kappa\eta_t\right)\norm{\nabla f(\bx_t)}^2+\frac{\eta_t}{2}d\left(L\kappa_{\beta}h_t^{\beta-1}\right)^2+\frac{3\bar{L} \eta_t^2}{4}\kappa d\left(\frac{\sigma^2}{h_t^2} + \frac{3\bar{L}^2{\color{black}d}}{4}h_t^2\right).
    \end{align*}
\end{lemma}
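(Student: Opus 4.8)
The plan is to assemble Lemmas~\ref{lem:Aux1}, \ref{lem:bias}, and \ref{lem:var}, since each of them controls exactly one of the terms on the right-hand side of the descent inequality. The first thing to verify is that Lemma~\ref{lem:Aux1} actually applies: because each $f_j\in\mathcal{F}_2'(\bL)$ has an $\bL$-Lipschitz derivative and $f$ is additive, we have $\nabla f(\bx)=(f_1'(x_1),\dots,f_d'(x_d))$, and hence $\norm{\nabla f(\bx)-\nabla f(\bx')}^2=\sum_{j=1}^d|f_j'(x_j)-f_j'(x_j')|^2\le \bL^2\norm{\bx-\bx'}^2$, so $\nabla f$ is globally $\bL$-Lipschitz. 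This grants the starting inequality
\[
\Exp\left[f(\bx_{t+1})|\bx_t\right]
\leq f(\bx_t) -\frac{\eta_t}{2}\norm{\nabla f(\bx_t)}^2 +\frac{\eta_t}{2}\norm{\Exp\left[\bg_t|\bx_t\right]-\nabla f(\bx_t)}^2+\frac{\bL \eta_t^2}{2}\Exp\left[\norm{\bg_{t}}^2|\bx_t\right].
\]

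Next I would bound the bias and the second moment separately. For the bias term I square the estimate of Lemma~\ref{lem:bias} to obtain $\norm{\Exp[\bg_t|\bx_t]-\nabla f(\bx_t)}^2\le d\,(L\kappa_\beta h_t^{\beta-1})^2$; multiplying by $\eta_t/2$ reproduces verbatim the middle term of the claimed bound. For the second-moment term I insert the variance estimate of Lemma~\ref{lem:var} into $\frac{\bL\eta_t^2}{2}\Exp[\norm{\bg_t}^2|\bx_t]$.

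The remaining work is purely arithmetic bookkeeping. Expanding the product $\frac{\bL\eta_t^2}{2}\cdot\frac32\kappa d\big(\frac34(d\bL^2 h_t^2+8\norm{\nabla f(\bx_t)}^2)+\sigma^2/h_t^2\big)$ splits it into a piece proportional to $\norm{\nabla f(\bx_t)}^2$ and a piece collecting the $\sigma^2/h_t^2$ and $d\bL^2 h_t^2$ contributions. The prefactor $\frac32\cdot\frac34\cdot 8=9$ shows the gradient piece equals $\frac{9\bL\kappa d\,\eta_t^2}{2}\norm{\nabla f(\bx_t)}^2$, which I combine with the leading $-\frac{\eta_t}{2}\norm{\nabla f(\bx_t)}^2$ to form the coefficient $-\frac{\eta_t}{2}(1-9\bL d\kappa\eta_t)$; the leftover piece is exactly $\frac{3\bL\eta_t^2}{4}\kappa d(\sigma^2/h_t^2+\frac34\bL^2 d\,h_t^2)$, matching the final term. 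I do not expect a genuine obstacle, as the statement is a direct consequence of the three lemmas; the only point demanding care is tracking the factor $8$ through the prefactors so that the two gradient-norm contributions merge cleanly into the coefficient $1-9\bL d\kappa\eta_t$.
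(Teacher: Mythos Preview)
Your proposal is correct and follows exactly the paper's approach: the paper's proof is the single line ``The result follows by combining Lemmas~\ref{lem:Aux1}, \ref{lem:bias} and \ref{lem:var},'' and you have carried out precisely that combination, including the verification that the additive structure gives $\nabla f$ an $\bL$-Lipschitz constant (so Lemma~\ref{lem:Aux1} applies) and the arithmetic that merges the $\norm{\nabla f(\bx_t)}^2$ contributions into the coefficient $1-9\bL d\kappa\eta_t$.
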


\begin{proof} The result follows by combining Lemmas~\ref{lem:Aux1}, \ref{lem:bias} and \ref{lem:var}. 
\end{proof}

\begin{lemma}\label{lem:Auxstr1}
 For $\alpha >0$ assume that $f$ is an $\alpha$-strongly convex function. Assume that $\{\bx_t\}_{t=1}^{T}$ is the realization of { Algorithm \ref{alg:ZO_SGD}}. Then, for all $t\in[T]$ and $\bx\in\com$, we have
  \begin{align*}
     f(\bx_t) -f(\bx)&\leq 
\left(2\eta_t\right)^{-1}\left(\norm{\bx_t-\bx}^2 -\Exp\left[\norm{\bx_{t+1}-\bx}^2|\bx_t\right]\right) +\frac{1}{\alpha}\norm{\nabla f(\bx_t)-\Exp\left[\bg_t|\bx_t\right]}^2\\&\quad\quad+\frac{\eta_t}{2}\Exp\left[\norm{\bg_t}^2|\bx_t\right]-\frac{\alpha}{4}\norm{\bx_t-\bx}^2.
 \end{align*}
 \end{lemma}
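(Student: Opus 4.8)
The plan is to carry out the standard single-step analysis of projected stochastic gradient descent, combining the non-expansiveness of the Euclidean projection with the strong convexity inequality of Definition~\ref{def:strongconv}, while carefully tracking the bias of the gradient estimator. First I would use that $\bx\in\com$ and that $\proj_{\com}$ is non-expansive to write $\norm{\bx_{t+1}-\bx}^2 = \norm{\proj_{\com}(\bx_t-\eta_t\bg_t)-\bx}^2 \le \norm{\bx_t-\eta_t\bg_t-\bx}^2$. Expanding the right-hand side gives $\norm{\bx_t-\bx}^2 - 2\eta_t\langle \bg_t,\bx_t-\bx\rangle + \eta_t^2\norm{\bg_t}^2$; taking the conditional expectation given $\bx_t$ and rearranging isolates the inner product as $\langle \Exp[\bg_t|\bx_t],\bx_t-\bx\rangle \le (2\eta_t)^{-1}\big(\norm{\bx_t-\bx}^2 - \Exp[\norm{\bx_{t+1}-\bx}^2|\bx_t]\big) + \tfrac{\eta_t}{2}\Exp[\norm{\bg_t}^2|\bx_t]$.

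Next I would split the conditional mean of the estimator as $\Exp[\bg_t|\bx_t] = \nabla f(\bx_t) + \mathbf{b}_t$, where $\mathbf{b}_t := \Exp[\bg_t|\bx_t] - \nabla f(\bx_t)$ is the bias, so that $\langle \Exp[\bg_t|\bx_t],\bx_t-\bx\rangle = \langle \nabla f(\bx_t),\bx_t-\bx\rangle + \langle \mathbf{b}_t,\bx_t-\bx\rangle$. Strong convexity (Definition~\ref{def:strongconv}) applied to the pair $(\bx_t,\bx)$ yields $\langle \nabla f(\bx_t),\bx_t-\bx\rangle \ge f(\bx_t)-f(\bx) + \tfrac{\alpha}{2}\norm{\bx_t-\bx}^2$. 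Substituting this lower bound into the previous display puts $f(\bx_t)-f(\bx) + \tfrac{\alpha}{2}\norm{\bx_t-\bx}^2 + \langle \mathbf{b}_t,\bx_t-\bx\rangle$ on the left of the telescoping-plus-second-moment bound.

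The only delicate point is the cross term $-\langle \mathbf{b}_t,\bx_t-\bx\rangle$, which I would control by Cauchy--Schwarz followed by Young's inequality in the form $\norm{\mathbf{b}_t}\,\norm{\bx_t-\bx} \le \tfrac{1}{\alpha}\norm{\mathbf{b}_t}^2 + \tfrac{\alpha}{4}\norm{\bx_t-\bx}^2$, which is equivalent to the nonnegativity of $\big(\norm{\mathbf{b}_t}-\tfrac{\alpha}{2}\norm{\bx_t-\bx}\big)^2$. This particular choice of constants is exactly what makes the final coefficients match: it spends half of the strong-convexity gain $\tfrac{\alpha}{2}\norm{\bx_t-\bx}^2$ to absorb the bias into $\tfrac{1}{\alpha}\norm{\mathbf{b}_t}^2 = \tfrac{1}{\alpha}\norm{\nabla f(\bx_t)-\Exp[\bg_t|\bx_t]}^2$, leaving the residual $-\tfrac{\alpha}{4}\norm{\bx_t-\bx}^2$. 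Collecting the remaining terms reproduces the claimed inequality verbatim. I expect no genuine obstacle beyond this bookkeeping; the one thing to get right is the Young's constant, since any other split would fail to reproduce the advertised $\tfrac{1}{\alpha}$ and $\tfrac{\alpha}{4}$ coefficients.
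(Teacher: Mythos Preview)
Your proposal is correct and follows essentially the same route as the paper's proof: non-expansiveness of the projection, the strong convexity inequality, and Cauchy--Schwarz plus Young's inequality $ab\le a^2/\alpha+\alpha b^2/4$ on the bias term to produce exactly the $\tfrac{1}{\alpha}$ and $\tfrac{\alpha}{4}$ coefficients. The only cosmetic difference is that you take the conditional expectation before inserting the strong convexity bound, whereas the paper inserts it first and conditions afterward; this reordering is immaterial.
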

\begin{proof}
    Fix $\bx\in\com$. From the definition of the Euclidean projection we have $\norm{\bx_{t+1}-\bx}^2 \leq \norm{\bx_t -  \eta_t\bg_t-\bx}^2$. Equivalently, 
    \begin{align*}
        \langle \bg_t , \bx_t-\bx\rangle\leq (2\eta_t)^{-1}(\norm{\bx_t-\bx}^2 -\norm{\bx_{t+1}-\bx}^2 )+\frac{\eta_t}{2}\norm{\bg_t}^2.
    \end{align*}
Let $a_t = \norm{\bx_t-\bx}^2$. Since $f$ is an $\alpha$-strongly convex function we have
 \begin{align*}
     f(\bx_t) -f(\bx)&\leq \langle\nabla f(\bx_t),\bx_t-\bx\rangle - \frac{\alpha}{2}a_t
     \\&=\langle\bg_t,\bx_t-\bx\rangle+\langle\nabla f(\bx_t)-\bg_t,\bx_t-\bx\rangle - \frac{\alpha}{2}a_t
     \\&\leq 
    (2\eta_t)^{-1}(a_t -a_{t+1}) +\langle\nabla f(\bx_t)-\bg_t,\bx_t-\bx\rangle+\frac{\eta_t}{2}\norm{\bg_t}^2-\frac{\alpha}{2}a_t.
 \end{align*}
 Taking the conditional expectation given $\bx_t$ and using the inequality $ab\leq a^2/\lambda + \lambda b^2/4$ valid for all $a,b\in \mathbb{R}$ and $\lambda>0$ we deduce that
  \begin{align*}
     f(\bx_t) -f(\bx)&\leq 
     (2\eta_t)^{-1}(a_t -\Exp\left[a_{t+1}|\bx_t\right] )+\langle\nabla f(\bx_t)-\Exp\left[\bg_t|\bx_t\right],\bx_t-\bx\rangle+\frac{\eta_t}{2}\Exp\left[\norm{\bg_t}^2|\bx_t\right]-\frac{\alpha}{2}a_t
     \\&\leq  (2\eta_t)^{-1}(a_t -\Exp\left[a_{t+1}|\bx_t\right] )+\norm{\nabla f(\bx_t)-\Exp\left[\bg_t|\bx_t\right]}\norm{\bx_t-\bx}+\frac{\eta_t}{2}\Exp\left[\norm{\bg_t}^2|\bx_t\right]-\frac{\alpha}{2}a_t
     \\&\leq (2\eta_t)^{-1}(a_t -\Exp\left[a_{t+1}|\bx_t\right] ) +\frac{1}{\alpha}\norm{\nabla f(\bx_t)-\Exp\left[\bg_t|\bx_t\right]}^2+\frac{\eta_t}{2}\Exp\left[\norm{\bg_t}^2|\bx_t\right]-\frac{\alpha}{4}a_t.
 \end{align*}
 \end{proof}

{\it Proof of Theorem \ref{thm1}.}
%
Since $\eta_t\leq 1/(18\bL{\color{black}d} \kappa)$ from Lemma \ref{lem:Aux2} we have 
\begin{align*}
       \Exp\left[f(\bx_{t+1})|\bx_t\right] 
    &\leq f(\bx_t) -\frac{\eta_t}{4}\norm{\nabla f(\bx_t)}^2+\frac{\eta_t}{2}d\left(L\kappa_{\beta}h_t^{\beta-1}\right)^2+\frac{3\bar{L} \eta_t^2}{4}\kappa d\left(\frac{\sigma^2}{h_t^2} + \frac{3\bar{L}^2{\color{black}d}}{4}h_t^2\right).
\end{align*}
Taking the expectation from both sides of this inequality and using the fact that $f$ is an $\alpha$-PL function we obtain
\begin{align}\label{eq:thm1}
       \delta_{t+1}
    &\leq \delta_t\left(1 - \frac{\eta_t\alpha}{2}\right)+\frac{\eta_t}{2}d\left(L\kappa_{\beta}h_t^{\beta-1}\right)^2+\frac{3\bar{L} \eta_t^2}{4}\kappa d\left(\frac{\sigma^2}{h_t^2} + \frac{3\bar{L}^2{\color{black}d}}{4}h_t^2\right),
\end{align}
where $\delta_t = \Exp\left[f(\bx_{t})-f^*\right] $. Let $T_0 = \floor{72\bL{\color{black}d}\kappa/\alpha}$. Note that 
\begin{align*}
    \eta_t = \begin{cases} \frac{1}{18\bL{\color{black}d}\kappa}\quad\quad &\text{if $t{\le } T_0$},\vspace{3mm}\\
    \frac{4}{\alpha t} \quad\quad &\text{if $t\geq T_0+1$},
    \end{cases}
\end{align*}
and 
\begin{align}\label{eq:etat}
    \frac{4}{(T_0+1)\alpha}\leq \eta_t\leq \frac{4}{T_0\alpha}, \quad \text{for} \ t\le T_0.
\end{align}

We consider separately the cases $T\ge T_0+1$ and $T\le T_0$. 

{If $T\ge T_0+1$ and $T\in \{1,2\}$ then, using \eqref{eq:thm1}, it is easy to check that the result of the theorem holds true. 

Consider now the case $T\ge 3$  and $T\ge T_0+1$, that is, $T\ge T_0'+1$, where $T_0'=\max(T_0,2)$.} For any $t \geq T_0+1$ we have $\eta_t = 4/(\alpha t)$ and we can write
\begin{align*}
       \delta_{t+1}
    &\leq \delta_t\left(1 - \frac{2}{t}\right)+\frac{d}{\alpha t}\left(2\left(L\kappa_{\beta}h_t^{\beta-1}\right)^2+\frac{3\bar{L}}{\alpha t}\kappa \left(\frac{\sigma^2}{h_t^2} + \frac{3\bar{L}^2{\color{black}d}}{4}h_t^2\right)\right).
\end{align*}
Substituting here $h_t = (\frac{3\bL}{\alpha t}\cdot\frac{\kappa \sigma^2}{2L^2\kappa_{\beta}^2})^{\frac{1}{2\beta}}$ gives
\begin{align*}
     \delta_{t+1}
    &\leq \delta_{t} \left(1 - \frac{2}{t}\right)+\cst\frac{d}{\alpha t}\left({\left(\frac{1}{\alpha t}\right)^{\frac{\beta-1}{\beta}}}+ {\frac{{\color{black}d}}{t}\left(\frac{1}{\alpha t}\right)^{\frac{1}{\beta}}}\right),
\end{align*}
where {$\cst>0$ depends only on ${L, \bL}$, $\beta$, and $\sigma^2$, and its value may vary across different appearances throughout this proof.}
Since $T\geq {T_0'+1 \ge 3}$, applying \cite[Lemma 32]{akhavan2023gradient} leads to the bound
\begin{align}\label{eq:deltat}
      \delta_T
    &\leq \frac{2{T_0'}}{T}\delta_{{T_0'}+1} +
    \cst\frac{d}{\alpha }\left({\left(\frac{1}{\alpha T}\right)^{\frac{\beta-1}{\beta}}}+ {\frac{{\color{black}d}}{T}\left(\frac{1}{\alpha T }\right)^{\frac{1}{\beta}}}\right).
\end{align}
{If $T_0\le 2$ (that is, $T_0'=2$) we use  \eqref{eq:thm1} with $t=1, t=2,$ the definition of $T_0$, \eqref{eq:etat}, and the definition of $h_t$ to obtain}
{
$$
\delta_{{T_0'}+1} = \delta_3 \le \delta_1 + \cst\frac{d}{\alpha }\left(\left(\frac{1}{\alpha }\right)^{\frac{\beta-1}{\beta}}+ {d}\left(\frac{1}{\alpha  }\right)^{\frac{1}{\beta}}\right).
$$
}
{
Combining this bound with \eqref{eq:deltat} completes the proof for the case $T_0'= 2$, $T\ge T_0'+1$.

Let now $T_0'\ge 3$ (which implies $T_0=T_0'$), and $T\ge T_0'+1$. }
From \eqref{eq:thm1}, \eqref{eq:etat} and the definition of $h_t$ we obtain that, for any $t\le T_0$,
{
\begin{align}\label{eq:deltat1}
    \delta_{t+1}\leq \delta_t\left(1-\frac{2}{T_0+1}\right) +\cst \frac{d}{\alpha T_0}\left(\left(\frac{1}{\alpha}\right)^{\frac{\beta-1}{\beta}}\left(T^{-\frac{\beta-1}{\beta}} + \frac{T^{\frac{1}{\beta}} }{T_0}\right) + \frac{{\color{black}d}}{T_0}\left(\frac{1}{\alpha T}\right)^{\frac{1}{\beta}}\right).
\end{align}
}
By iterating \eqref{eq:deltat1} with the simplification $1-2/(T_0+1) \leq 1$, and taking into account the definition of $T_0$ and the inequality $T \ge  T_0+1$ we find that
{
\begin{align*}
    \frac{2T_0}{T}\delta_{{T_0}+1}\leq \frac{144\bL {\color{black}d}\kappa}{\alpha T}\delta_1+ \cst\frac{{}d}{\alpha }\left(\left(\frac{1}{\alpha T}\right)^{\frac{\beta-1}{\beta}} + \frac{{\color{black}d}}{T}\left(\frac{1}{\alpha T}\right)^{\frac{1}{\beta}}\right).
\end{align*}
}
This bound and \eqref{eq:deltat} imply that
{
\begin{align*}
    \delta_T \leq \frac{144\bL {\color{black}d}\kappa}{\alpha T}\delta_1 +   \cst\frac{d}{\alpha }\left(\left(\frac{1}{\alpha T}\right)^{\frac{\beta-1}{\beta}}+ \frac{{\color{black}d}}{T}\left(\frac{1}{\alpha T }\right)^{\frac{1}{\beta}}\right).
\end{align*}
}
The proof for the case $T\ge T_0+1$ is complete.

Consider now the case $T \leq T_0$. {In this case \eqref{eq:deltat1} holds for all $t\le T$ and we have the bound}  
{
\begin{align*}
 \delta_{T+1}&\leq 
    \delta_1\left(1-\frac{2}{T_0+1}\right)^T + \cst\frac{d}{\alpha }\left(\left(\frac{1}{\alpha T}\right)^{\frac{\beta-1}{\beta}} + \frac{{\color{black}d}}{T}\left(\frac{1}{\alpha T}\right)^{\frac{1}{\beta}}\right).
\end{align*}
}
Since $(1-\lambda)^T \leq \exp(-\lambda T) \leq 1/(\lambda T)$ for all {$T>0,\lambda \in (0,1)$}
we obtain
{
\begin{align*}
 \delta_{T+1}&\leq 
    \frac{T_0 + 1}{2T}\delta_1 + \cst\frac{d}{\alpha }\left(\left(\frac{1}{\alpha T}\right)^{\frac{\beta-1}{\beta}} + \frac{{\color{black}d}}{T}\left(\frac{1}{\alpha T}\right)^{\frac{1}{\beta}}\right).
\end{align*}
}
{
By using the fact that {$4TT_0\ge (T+1)(T_0+1)$} we finally get
\begin{align*}
 \delta_{T+1}&\leq 
    \frac{2T_0}{T+1}\delta_1 + \cst\frac{d}{\alpha }\left(\left(\frac{1}{\alpha (T+1)}\right)^{\frac{\beta-1}{\beta}} + \frac{{\color{black}d}}{T+1}\left(\frac{1}{\alpha (T+1)}\right)^{\frac{1}{\beta}}\right)
    \\&\leq \frac{144\bL{\color{black}d}\kappa}{\alpha (T+1)}\delta_1 + \cst\frac{d}{\alpha }\left(\left(\frac{1}{\alpha (T+1)}\right)^{\frac{\beta-1}{\beta}} + \frac{{\color{black}d}}{T+1}\left(\frac{1}{\alpha (T+1)}\right)^{\frac{1}{\beta}}\right).
\end{align*}
}
\vspace{2mm}

{\it Proof of Theorem \ref{thm2}.}
%
Fix $\bx\in\com$. By Lemma \ref{lem:Auxstr1} we have 
      \begin{align*}
     f(\bx_t) -f(\bx)\leq \frac{a_t -\Exp\left[a_{t+1}|\bx_t\right] }{2\eta_t} +\frac{1}{\alpha}\norm{\nabla f(\bx_t)-\Exp\left[\bg_t|\bx_t\right]}^2+\frac{\eta_t}{2}\Exp\left[\norm{\bg_t}^2|\bx_t\right]-\frac{\alpha}{4}a_t,
 \end{align*}
 where $a_t = \norm{\bx_t-\bx}^2$.
 Using Lemmas \ref{lem:bias} and \ref{lem:var} and the assumption that $\max_{\bx\in\com}\norm{\nabla f(\bx)}\leq G$ we obtain 
\begin{align*}
f(\bx_t) -f(\bx)&\leq \frac{a_t -\Exp\left[a_{t+1}|\bx_t\right] }{2\eta_t} +\frac{d}{\alpha}(\kappa_{\beta}Lh_t^{\beta-1})^2
+\\ 
&\quad +\frac{3\eta_t}{4}\kappa d\left(\frac{3}{4}\left(d\bL^2 h_t^2 + 8G^2\right) + \frac{\sigma^2}{h_t^2}\right)-\frac{\alpha a_t}{4}.
\end{align*}
Let $b_t = \Exp\left[\norm{\bx_t-\bx}^2\right]$. Substituting here $\eta_t = 4/(\alpha (t+1))$ and taking the expectation we obtain 
\begin{align*}
\Exp\left[f(\bx_t) -f(\bx)\right]&\leq \frac{\alpha}{8}\left((t+1)\left(b_t-b_{t+1}\right)-2b_t\right)
+\\ 
&\quad+\frac{d}{\alpha}(\kappa_{\beta}Lh_t^{\beta-1})^2+\frac{3}{\alpha (t+1)}\kappa d\left(\frac{3}{4}\left(d\bL^2 h_t^2 + 8G^2\right) + \frac{\sigma^2}{h_t^2}\right).
\end{align*}
Summing up both sides of this inequality from $1$ to $T$ and using the fact that
\begin{align*}
\sum_{t=1}^{T}t\left((t+1)\left(b_t-b_{t+1}\right)-2b_t\right)\leq 0,
\end{align*}
we find
\begin{align*}
&\sum_{t=1}^{T} t\Exp\left[f(\bx_t) -f(\bx)\right]\leq 
\\ 
&\qquad\leq \frac{d}{\alpha}\sum_{t=1}^{T}\left(t(\kappa_{\beta}Lh_t^{\beta-1})^2+\frac{3t}{t+1}\kappa \left(\frac{3}{4}\left(d\bL^2 h_t^2 + 8G^2\right) + \frac{\sigma^2}{h_t^2}\right)\right).
\end{align*}
Substituting here $h_t = (\frac{3}{2t}\frac{\kappa\sigma^2}{\kappa_\beta^2L^2})^{\frac{1}{2\beta}}$ we get the inequality
\begin{align*}
\sum_{t=1}^{T} t\Exp\left[f(\bx_t) -f(\bx)\right]&\leq  \frac{18G^2d\kappa T}{\alpha}+\cst \frac{d}{\alpha}\sum_{t=1}^{T}\left( t^{\frac{1}{\beta}}+d t^{-\frac{1}{\beta}}\right)\le
\\
&\quad \le
\frac{18G^2d\kappa T}{\alpha}+\cst' \frac{d}{\alpha}\left( 1+dT^{-\frac{2}{\beta}}\right)T^{\frac{\beta+1}{\beta}}.
\end{align*}
 Here $\cst, \cst'$ are positive constants depending only on $L$, $\bL$, $\beta$, and $\sigma^2$. Dividing both sides of this inequality by $T(T+1)/2$ and applying Jensen's inequality we get the result of the theorem.

\section*{Acknowledgement}
The research of Arya Akhavan was funded by UK Research and Innovation (UKRI) under the UK government’s Horizon Europe funding guarantee [grant number EP/Y028333/1].

\bibliographystyle{abbrv}
\bibliography{bibliography.bib}
\end{document}